\definecolor{linkblue}{rgb}{0.1,0.1,0.8}
\newtheorem{theorem}{Theorem}
\newtheorem{lemma}[theorem]{Lemma}
\newtheorem{corollary}[theorem]{Corollary}
\newcommand{\ignore}[1]{}
\renewcommand{\O}{\ensuremath{\mathord{O}}} 
\renewcommand{\o}{\ensuremath{\mathord{o}}} 
\renewcommand{\Pr}{\ensuremath{\mathrm{Pr}}}
\newcommand{\EXP}{\ensuremath{\operatorname{E}}}
\newcommand{\euler}{\mathrm{e}\xspace}
\newcommand{\onemax}{\textsc{OneMax}\xspace}
\newcommand{\binval}{\textsc{BinVal}\xspace}
\newcommand{\bmid}{\,\big|\,}
\title{Multiplicative Drift Analysis}
\author{
\selectlanguage{german}
Benjamin Doerr \and Daniel Johannsen \and Carola Winzen\thanks{Carola Winzen is a recipient of the Google Europe Fellowship in Randomized Algorithms, and this work is supported in part by this Google Fellowship.}\\
Max-Planck-Institut f\"{u}r Informatik\\
Campus E1 4 \\
66123 Saarbr\"{u}cken, Germany\\
}%
\date{Submitted January 2011}
\begin{document}
\maketitle

\begin{abstract}
In this work, we introduce multiplicative drift analysis as a suitable way to analyze the runtime of randomized search heuristics such as evolutionary algorithms.

We give a multiplicative version of the classical drift theorem. This allows easier analyses in those settings where the optimization progress is roughly proportional to the current distance to the optimum.

To display the strength of this tool, we regard the classical problem how the (1+1)~Evolutionary Algorithm optimizes an arbitrary linear pseudo-Boolean function. Here, we first give a relatively simple proof for the fact that any linear function is optimized in expected time $O(n \log n)$, where $n$ is the length of the bit string. Afterwards, we show that in fact any such function is optimized in expected time at most ${(1+o(1)) 1.39 \euler n\ln (n)}$, again using multiplicative drift analysis. We also prove a corresponding lower bound of ${(1-o(1))e n\ln(n)}$ which actually holds for all functions with a unique global optimum.

We further demonstrate how our drift theorem immediately gives natural proofs (with better constants) for the best known runtime bounds for the (1+1)~Evolutionary Algorithm on combinatorial problems like finding minimum spanning trees, shortest paths, or Euler tours.
\end{abstract}

\newpage

\section{Introduction}
\label{sec:introduction}

An innocent looking problem is the question how long the (1+1)~Evolutionary Algorithm ((1+1)~EA) needs to find the optimum of a given linear function. However, this is in fact one of the problems that was most influential for the theory of evolutionary algorithms. 

While particular linear functions like the functions \onemax and \binval were easily analyzed, it took a major effort by Droste, Jansen and Wegener~\cite{DrosteJW02} to solve the problem in full generality. Their proof, however, is highly technical. 

A major breakthrough spurred by this problem is the work by He and Yao~\cite{HeY01,HeY04}, who introduced \emph{drift analysis} to the field of evolutionary computation. This allowed a significantly simpler proof for the linear functions problem. Even more important, it quickly became one of the most powerful tools for both proving upper and lower bounds on the expected optimization times of evolutionary algorithms. For example, see~\cite{HeY04, GielWstacs03, GielL06, HappJKN08, NeumannOW09, OlivetoW10}.

Another great progress was made by J\"agersk\"upper~\cite{Jagerskupper08}, who combined drift analysis with a clever averaging argument to determine reasonable values for the usually not explicitly given constants. More precisely, J\"agersk\"upper showed that the expected optimization time of the (1+1)~EA for any linear function defined on bit strings of length~$n$ is bounded from above by $(1+o(1)) 2.02 \euler n \ln(n)$.
\subsection{Classical Drift Analysis}

The following method was introduced to the analysis of randomized search heuristics by He and Yao~\cite{HeY04} and builds on a result of Hajek~\cite{Hajek82}. When analyzing the optimization behavior of a randomized search heuristic over a search space, instead of tracking how the objective function improves, one uses an auxiliary potential function and tracks its behavior.

For example, consider the search space~$\{0,1\}^n$ of bitstrings of length $n\in\mathbb{N}$.\footnote{By~$\mathbb{N}:=\{0,1,2,\dots\}$ we denote the set of integers including zero and by~$\mathbb{R}$ we denote the set of real numbers.} Suppose we want to analyze the (1+1)~EA (which is introduced as Algorithm~\ref{alg:oneoneea} in Section~\ref{sec:linear}) minimizing a linear function $f\colon\{0,1\}^n\to\mathbb{R}$ with
\begin{equation*}
f(x)=\sum_{i=1}^n w_i x_i
\end{equation*}
and arbitrary positive weights $0<w_1\le\dots\le w_n$. (Note that we differ from previous works by always considering minimization problems. See Section~\ref{sec:linear} for a discussion why this does not influence the runtime analysis.) Then this potential function $h\colon \{0,1\}^n \to \mathbb{R}$ can be chosen as
\begin{equation}
\label{eq:logpotential}
h(x)=\ln\Big(1+\sum_{j=1}^{\lfloor \frac{n}{2}\rfloor}x_j+\sum_{j=\lfloor \frac{n}{2}\rfloor +1}^n 2x_j\Big).
\end{equation}

Though still needing some calculations, one can show the following (see, e.g.,~\cite{HeY04} where a variant of Algorithm~\ref{alg:oneoneea} is analyzed). Let~$x \in \{0,1\}^n$. Let~$y \in \{0,1\}^n$ be the result of one iteration (mutation and selection) of the (1+1)~EA started in $x$. Then there exists a $\delta > 0$ such that
\begin{equation}
\label{eq:drift}
 \EXP[h(y)] \le h(x) - \delta/n.
\end{equation}
Now, classical drift analysis tells us that in expectation after a number of $h(x)/(\delta/n) = O(n \log n)$ iterations, the potential value is reduced to zero. But $h(x)=0$ implies $f(x)=0$, that is, the (1+1)~EA has found the desired optimum.


Using drift analysis to analyze a randomized search heuristic usually bears two difficulties. The first is guessing a suitable potential function~$h$. The second, related to the first, is proving that during the search, $f$ and~$h$ behave sufficiently similar, that is, we can prove some statement like inequality~(\ref{eq:drift}). Note that this inequality contains information about $f$ as well, namely implicitly in the fact that $y$ has an at least as good $f$--value as~$x$.

A main difficulty in showing that $h$ in~(\ref{eq:logpotential}) is a suitable potential function is the logarithm around the simple linear function giving weights one and two to the bits. However, since the optimization progress for linear functions is faster if we are further away from the optimum, that is, have more one-bits, this seems difficult to avoid.

\subsection{Multiplicative Drift Analysis}

We present a way to ease to use of drift analysis in such settings. Informally, our method applies if we have a potential function $g$ satisfying
\begin{equation}
\label{eq:drift2}
 \EXP[g(y)] \le (1 - \delta)g(x)
\end{equation}
in the notation above. That is, we require a progress which \emph{multiplicatively} depends on the current potential value. For this reason we call the method \emph{multiplicative drift analysis}. We will see that for a number of problems such potential functions are a natural choice. 

This new method allows us to largely separate the structural analysis of an optimization process from the actual calculation of a bound on the expected optimization time. Moreover, the runtime bounds obtained by multiplicative drift analysis are often sharper than those resulting from previously used techniques.

\subsection{Our Results}

We apply this new tool, multiplicative drift analysis, to the already mentioned problem of optimizing linear functions over~$\{0,1\}^n$. This yields a simplified proof of the $O(n\log n)$ bound on the optimization time of the (1+1)~EA. Similar to the proof using the classical drift theorem, we make use of the simple linear function $g\colon \{0,1\}^n \to \mathbb{R}$, chosen as
\begin{equation*}
g(x)=\sum_{i=1}^n \Big(1+\frac{i}{n}\Big)x_i.
\end{equation*}
This function~$g$ serves us as a potential function for all linear functions $f\colon \{0,1\}^n \to \mathbb{R}$ with
\begin{equation*}
f(x)=\sum_{i=1}^n w_i x_i
\end{equation*}
and monotone weights~$0<w_1\le\dots \le w_n$.

Using parts of J\"agersk\"upper's analysis~\cite{Jagerskupper08}, we then improve his upper bound on the expected optimization time of the (1+1)~EA on linear functions to $(1+o(1)) 1.39 \euler n \ln(n)$. 

We also give lower bounds for this problem. We show that, in the class of all functions with a unique global optimum, the function \onemax (see~(\ref{eq:onemax})) has the smallest expected optimization time. This extends the lower bound of  $(1 - o(1)) \euler n \ln(n)$ for the expected optimization time of the (1+1)~EA on \onemax~\cite{DoerrFW10} to all functions in that class (including all linear functions with non-zero coefficients).  

Together with our upper bound, we thus obtain the remarkable result that all linear functions have roughly (within a 39\% range) the same optimization time.

To further demonstrate the strength of multiplicative drift analysis, we give straight-forward analyses for three prominent combinatorial problems. We consider the problems of computing minimum spanning trees (MST), single-source shortest paths (SSSP), and Euler tours. Here, we reproduce the results obtained in~\cite{NeumannW07} (cf.~Theorem~\ref{thm:mst}), in~\cite{BaswanaBDFKN09} (cf.~Theorem~\ref{thm:sssp}), and in~\cite{DoerrJ07} (cf.~Theorem~\ref{thm:euler}), respectively. In doing so, we improve the leading constants of the asymptotic bounds.

\section{Multiplicative Drift Analysis}
\label{sec:drift}

Drift analysis can be used to track the optimization behavior of a randomized search heuristic over a search space by measuring the progress of the algorithm with respect to a \emph{potential function}. Such a function maps each search point to a non-negative real number, where a potential of zero indicates that the search point is optimal.

\begin{theorem} [Additive Drift~\cite{HeY04}]
\label{thm:additivedrift}
Let~$ S\subseteq\mathbb{R}$ be a finite set of positive numbers and let $\{X^{(t)}\}_{t\in\mathbb{N}}$ be a sequence of random variables over~$ S\cup\{0\}$. Let~$T$ be the random variable that denotes the first point in time~$t\in\mathbb{N}$ for which $X^{(t)}=0$. 

Suppose that there exists a constant~$\delta>0$ such that
\begin{equation}
\label{eq:additivedrift}
\EXP\big[X^{(t)}-X^{(t+1)}\bmid T>t\big]\ge \delta
\end{equation}
holds. Then
\begin{equation*}
\EXP\big[T\bmid X^{(0)}\big] \le \frac{X^{(0)}}{\delta}.
\end{equation*}
\end{theorem}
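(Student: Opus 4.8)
The plan is to establish the additive drift bound via a martingale-style argument, using the drift condition to control the expected value of $X^{(t)}$ over time. The core idea is that condition~(\ref{eq:additivedrift}) forces the process to decrease by at least $\delta$ in expectation at every step before absorption at zero, so the ``time budget'' $X^{(0)}/\delta$ should suffice in expectation. First I would define the stopped process and consider the auxiliary quantity $X^{(t)} + \delta \cdot \min\{t, T\}$, aiming to show it is a supermartingale; equivalently, I would track $\EXP[X^{(t)} + \delta t \mid T > t]$ and show it does not increase in expectation.

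The key steps, in order, are as follows. First I would rewrite the drift hypothesis so that it applies on the event $\{T > t\}$ and note that on the complementary event $\{T \le t\}$ we already have $X^{(t)} = 0$ (since $S$ consists of positive numbers, $X^{(t)} = 0$ exactly characterizes absorption, and once absorbed the process stays at zero — this I would need to argue or assume from the setup). Second, I would take expectations in~(\ref{eq:additivedrift}) and combine with the law of total expectation to obtain a recursion of the form $\EXP[X^{(t+1)}] \le \EXP[X^{(t)}] - \delta \Pr[T > t]$. Third, summing this telescoping inequality from $t = 0$ up to $\infty$ and using $X^{(t)} \ge 0$ yields
\begin{equation*}
\delta \sum_{t=0}^{\infty} \Pr[T > t] \le X^{(0)} - \lim_{t\to\infty}\EXP[X^{(t)}] \le X^{(0)}.
\end{equation*}
Finally, since $\EXP[T] = \sum_{t=0}^{\infty} \Pr[T > t]$ for a nonnegative integer-valued random variable, dividing by $\delta$ gives $\EXP[T \mid X^{(0)}] \le X^{(0)}/\delta$, as desired.

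I expect the main obstacle to be the careful handling of the boundary and limiting behavior, rather than the algebra of the telescoping sum. Specifically, I must confirm that once $X^{(t)} = 0$ the process remains at zero (so that the definition of $T$ as the \emph{first} hitting time is compatible with treating the contribution of absorbed paths as zero in the recursion), and I must justify interchanging the limit and the sum — this relies on $X^{(t)}$ being nonnegative and the finiteness of $S$, which guarantees the relevant expectations are well-defined. A subtle point is that~(\ref{eq:additivedrift}) is conditioned on $T > t$, so I must weight it by $\Pr[T > t]$ when passing to unconditional expectations; getting this conditioning bookkeeping exactly right is where an otherwise routine proof can go astray. If one prefers, the supermartingale formulation via optional stopping gives the same conclusion more slickly, but the elementary telescoping argument avoids invoking stopping-time theorems and keeps the proof self-contained.
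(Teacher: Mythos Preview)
The paper does not supply its own proof of this theorem; it is quoted as a known result from He and Yao~\cite{HeY04} and used as a black box to derive the multiplicative drift theorem. So there is no in-paper argument to compare against.

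Your telescoping argument is the standard one and is essentially correct. The one genuine gap you already identified yourself: as stated, the theorem does \emph{not} assert that $X^{(t)}$ stays at zero once it hits zero, so on the event $\{T\le t\}$ you cannot simply declare $X^{(t)}-X^{(t+1)}=0$. The clean fix is exactly the one you hint at: replace $X^{(t)}$ by the stopped process $Y^{(t)}:=X^{(\min\{t,T\})}$. Then $Y^{(t)}-Y^{(t+1)}=0$ holds trivially on $\{T\le t\}$, while on $\{T>t\}$ the drift condition~(\ref{eq:additivedrift}) transfers to $Y$ unchanged. The telescoping sum then reads $\delta\sum_{t\ge 0}\Pr[T>t]\le Y^{(0)}-\liminf_{t\to\infty}\EXP[Y^{(t)}]\le X^{(0)}$, and the identity $\EXP[T]=\sum_{t\ge 0}\Pr[T>t]$ finishes the job. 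With this adjustment the proof is complete; the finiteness of $S$ makes all expectations finite and the limit harmless. The supermartingale/optional-stopping variant you mention is equivalent but would require checking integrability of $T$ first, which the elementary summation avoids.
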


This theorem tells us how to link the expected time at which the potential reaches zero to the first time the expected value of the potential reaches zero. If in expectation the potential decreases in each step by~$\delta$ then after $X^{(0)}/\delta$ steps the expected potential is zero. Thus, one might expect the expected number of steps until the (random) potential reaches zero to be $X^{(0)}/\delta$, too. This is indeed the case in the setting of the previous theorem.

In order to apply the previous theorem to the analysis of randomized search heuristics over a (finite) search space~$\mathcal{S}$, we define a potential function~$h\colon\mathcal{S}\to\mathbb{R}$ which maps all optimal search points to zero and all non-optimal search points to values strictly larger than zero. We choose the random variable~$X^{(t)}$ as the potential~$h(x^{(t)})$ of the search point (or population) in the $t$-th iteration of the algorithm. Then the random variable~$T$ becomes the optimization time of the algorithm, that is, the number of iterations until the algorithm finds an optimum.

When applying Theorem~\ref{thm:additivedrift}, we call the expected difference between $h(x^{(t)})$ and $h(x^{(t+1)})$ the \emph{drift} of the random process~$\{x^{(t)}\}_{t\in\mathbb{N}}$ with respect to~$h$. We say this drift is \emph{additive} if condition~(\ref{eq:additivedrift}) holds.

\subsection{Ideal Potential Functions for Additive Drift Analysis}

The application of additive drift analysis (Theorem~\ref{thm:additivedrift}) to the runtime analysis of randomized search heuristics requires a suitable potential function. The following lemma (Lemma~3 in~\cite{HeY04}) tells us that if the random search points~$x^{(0)},x^{(1)},x^{(2)},\dots$ generated by a search heuristic form a homogeneous absorbing Markov chain, then there always exists a potential function such that condition~(\ref{eq:additivedrift}) in Theorem~\ref{thm:additivedrift} holds with equality; namely the function that attributes to each search point the expected optimization time of the algorithm starting in that point.

\begin{lemma}[\cite{HeY04}]
\label{lem:idealpotential}
Let~$\mathcal{S}$ be a finite search space and~$\{x^{(t)}\}_{t\in\mathbb{N}}$ the search points generated by a homogeneous absorbing Markov chain on~$\mathcal{S}$. Let~$T$ be the random variable that denotes the fist point in time~$t\in\mathbb{N}$ such that~$x^{(t)}$ is optimal.

Then the drift on the potential function~$g\colon\mathcal{S}\to\mathbb{R}$ with
\begin{equation*}
g(x):=\EXP[T\mid x^{(0)}=x]
\end{equation*}
satisfies
\begin{equation*}
\EXP\big[g(x^{(t)})-g(x^{(t+1)})\bmid T>t\big]=1.
\end{equation*}
\end{lemma}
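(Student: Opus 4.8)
The plan is to exploit the defining property of the potential function $g(x) = \EXP[T \mid x^{(0)} = x]$, namely that it records the expected remaining optimization time, and to show that conditioning on survival past step $t$ makes the drift exactly $1$. First I would set up notation: let $\mathcal{O} \subseteq \mathcal{S}$ denote the set of optimal search points, so that $g(x) = 0$ for $x \in \mathcal{O}$ and $T > t$ is the event that $x^{(0)}, \dots, x^{(t)}$ are all non-optimal. The key observation is a one-step recurrence for $g$. By the law of total expectation over the first transition of the Markov chain, and using homogeneity (so that the expected time-to-absorption starting from any state does not depend on when we start observing), we have for every non-optimal $x$
\begin{equation}
\label{eq:grecurrence}
g(x) = 1 + \EXP\big[g(x^{(t+1)}) \bmid x^{(t)} = x\big].
\end{equation}
In words: from a non-optimal state we spend one step to move to $x^{(t+1)}$, and then the expected remaining time is $\EXP[g(x^{(t+1)})]$ by the Markov and homogeneity properties. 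Rearranging gives $g(x) - \EXP[g(x^{(t+1)}) \mid x^{(t)} = x] = 1$ for each non-optimal $x$.

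Next I would lift this pointwise identity to the conditional expectation appearing in the statement. Conditioning on $T > t$ means conditioning on $x^{(t)}$ being non-optimal (together with the earlier points being non-optimal, but by the Markov property only the value of $x^{(t)}$ matters for the future). I would write
\begin{equation*}
\EXP\big[g(x^{(t)}) - g(x^{(t+1)}) \bmid T > t\big]
= \sum_{x \notin \mathcal{O}} \Pr\big[x^{(t)} = x \bmid T > t\big] \cdot \big(g(x) - \EXP\big[g(x^{(t+1)}) \bmid x^{(t)} = x\big]\big),
\end{equation*}
where I have used the Markov property to replace the inner conditioning on the whole history $T > t$ by conditioning on $x^{(t)} = x$ alone. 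Substituting the recurrence~(\ref{eq:grecurrence}), each bracketed term equals $1$, so the sum collapses to $\sum_{x \notin \mathcal{O}} \Pr[x^{(t)} = x \mid T > t] = 1$, which is the claimed drift.

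The main obstacle I anticipate is justifying the recurrence~(\ref{eq:grecurrence}) cleanly, in particular handling the conditioning so that the Markov and homogeneity properties genuinely let me treat $\EXP[g(x^{(t+1)}) \mid x^{(t)} = x]$ as the expected remaining time from a fresh start at the distribution of $x^{(t+1)}$. I would make this rigorous by noting that for an absorbing chain with a finite number of states and reachable absorbing set, $g(x)$ is finite for every $x$ (so the expectations are well-defined and the rearrangements are valid), and that homogeneity means the transition kernel is the same at every step, so the distribution of the trajectory after time $t$ given $x^{(t)} = x$ coincides with that of a chain freshly started at $x$. A minor subtlety to flag is that the identity only holds for non-optimal $x$ — for $x \in \mathcal{O}$ the state is absorbed and there is no "$+1$" step — but since we condition on $T > t$ these absorbed states carry zero probability mass in the sum, so they do not interfere.
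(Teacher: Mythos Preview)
Your argument is correct. The one-step recurrence $g(x)=1+\EXP[g(x^{(t+1)})\mid x^{(t)}=x]$ for non-optimal $x$ is exactly the first-step analysis for expected hitting times in a homogeneous Markov chain, and your use of the Markov property to drop the history in the conditioning $T>t$ once $x^{(t)}=x$ is fixed is sound (indeed, since the optimal states are absorbing, $\{x^{(t)}=x\}$ with $x\notin\mathcal{O}$ already forces $T>t$).

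The paper's (suppressed) proof organizes the computation differently. Rather than a pointwise recurrence averaged against $\Pr[x^{(t)}=x\mid T>t]$, it evaluates the two terms $\EXP[g(x^{(t)})\mid T>t]$ and $\EXP[g(x^{(t+1)})\mid T>t]$ separately, showing via the identity $g(x)=\EXP[T\mid x^{(t)}=x,\,T>t]-t$ that they equal $\EXP[T\mid T>t]-t$ and $\EXP[T\mid T>t]-(t+1)$ respectively, and then subtracts. Your route is arguably cleaner: it stays local (one step at a time) and avoids the ``Similarly'' for the $(t+1)$-term, which in the paper's argument hides a small case split according to whether $x^{(t+1)}$ is already optimal. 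The paper's route, on the other hand, makes the interpretation $\EXP[g(x^{(t)})\mid T>t]=\EXP[T\mid T>t]-t$ explicit, which is a nice byproduct. Either way the content is the same Markov/homogeneity fact, just sliced differently.
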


\ignore{
\begin{proof}
By $g(x)=\EXP[T\mid x^{(0)}=x]$ and by the definition of conditional expectation we have
\begin{equation*}
\EXP\big[g(x^{(t)})\bmid T>t\big]=\sum_{x\in\mathcal{S}}\EXP\big[T\bmid x^{(0)}=x]\,\Pr\big[x^{(t)}=x\bmid T>t\big].
\end{equation*}
Since $\Pr[x^{(t)}=x\mid T>t]=0$ for~$g(x)=0$, we have
\begin{equation*}
\EXP\big[g(x^{(t)})\bmid T>t\big]=\sum_{x\in\mathcal{S}:g(x)>0}\EXP\big[T\bmid x^{(0)}=x]\,\Pr\big[x^{(t)}=x\bmid T>t\big].
\end{equation*}
Furthermore, since~$\{x^{(t)}\}$ is a Markov process, we have
\begin{equation*}
\EXP\big[T\bmid x^{(0)}=x\big]=\EXP\big[T\bmid x^{(t)}=x, T>t\big]-t,
\end{equation*}
 for all~$x\in\mathcal{S}$ such that $g(x)\neq 0$ and thus 
\begin{equation*}
\EXP\big[g(x^{(t)})\bmid T>t\big]=\EXP\big[T\bmid T>t\big]-t.
\end{equation*}
by the law of total expectation. Similarly,
\begin{equation*}
\EXP\big[g(x^{(t+1)})\bmid T>t\big]=\EXP\big[T\bmid T>t\big]-(t+1)
\end{equation*}
and the theorem follows.
\end{proof}
}
In a way, $\EXP[T\mid x^{(0)}=x]$ is an ``ideal'' potential function for Theorem~\ref{thm:additivedrift}. It satisfies the additive drift condition~(\ref{eq:additivedrift}) with equality and results in precise upper bound on~$\EXP[T\mid x^{(0)}=x]$. However, the previous theorem is not directly helpful in the runtime analysis of randomized search heuristics. In order to apply the previous theorem, we need to know the exact expected optimization time of a algorithm starting from every point in the search point. But with all this known, Theorem~\ref{thm:additivedrift} does not provide new information.

Still, the previous theorem indicates that potential functions which approximate the expected optimization time in the respective point are good candidates likely to satisfy the additive drift condition. In the next section, we will see such a potential function suitable for the analysis of the optimization behavior of the (1+1)~Evolutionary Algorithm on linear functions.

\subsection{A Multiplicative Drift Theorem}

The drift theorem presented in this subsection can be considered as the multiplicative version of the classical additive result. Since we derive it from the original result, it is clear that the multiplicative version cannot be stronger than the original theorem.

\begin{theorem} [Multiplicative Drift]
\label{thm:multidrift}
Let~$ S\subseteq\mathbb{R}$ be a finite set of positive numbers with minimum $s_{\min}$. Let $\{X^{(t)}\}_{t\in\mathbb{N}}$ be a sequence of random variables over~$ S\cup\{0\}$. Let~$T$ be the random variable that denotes the first point in time~$t\in\mathbb{N}$ for which $X^{(t)}=0$. 

Suppose that there exists a constant~$\delta>0$ such that
\begin{equation}
\label{eq:multidrift}
\EXP\big[X^{(t)}-X^{(t+1)}\bmid X^{(t)}= s\big]\ge \delta s
\end{equation}
holds for all~$s\in S$ with $\Pr[X^{(t)}= s]>0$. Then for all~$s_0\in S$ with~$\Pr[X^{(0)}=s_0]>0$,
\begin{equation*}
\EXP\big[T\bmid X^{(0)}=s_0\big] \le \frac{1+\ln(s_0/s_{\min})}{\delta}.
\end{equation*}
\end{theorem}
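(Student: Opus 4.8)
The plan is to deduce the multiplicative statement from the additive drift theorem (Theorem~\ref{thm:additivedrift}) by passing to a logarithmically rescaled potential. Concretely, I would define $g\colon S\cup\{0\}\to\mathbb{R}$ by $g(0):=0$ and $g(s):=1+\ln(s/s_{\min})$ for $s\in S$, and set $Y^{(t)}:=g(X^{(t)})$. Since $g$ is strictly increasing on $S$ with $g(s)\ge 1$ there, the variables $Y^{(t)}$ range over a finite set of positive numbers together with $\{0\}$, and $Y^{(t)}=0$ holds exactly when $X^{(t)}=0$; in particular the hitting time $T$ is unchanged. The target bound is precisely $g(s_0)/\delta$, so it suffices to show that $\{Y^{(t)}\}$ has additive drift at least $\delta$ and then invoke Theorem~\ref{thm:additivedrift}.

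The heart of the argument is the single pointwise inequality
\begin{equation*}
g(x)\le g(s)+\frac{x-s}{s}\qquad\text{for all }x\in S\cup\{0\}\text{ and all }s\in S,
\end{equation*}
stating that $g$ never exceeds its tangent line at~$s$. For $x\in S$ this is just concavity of the logarithm (the map $x\mapsto\ln(x/s_{\min})$ lies below its tangent of slope $1/s$ at $x=s$), and it survives adding the constant~$1$ to both sides. The one case needing separate attention is $x=0$: here the left-hand side is $g(0)=0$, while the right-hand side equals $g(s)-1=\ln(s/s_{\min})\ge 0$ because $s\ge s_{\min}$, so the inequality still holds. I expect this transition to the optimum to be the main obstacle, and it is exactly the additive ``$+1$'' in the definition of $g$ that rescues it: without the offset the bound would read $\ln(s/s_{\min})-1$, which is negative at $s=s_{\min}$, and the inequality would fail. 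The ``$+1$'' accounts for the fact that the potential may collapse all the way to zero in a single step, and it is the source of the ``$1+$'' in the claimed runtime bound.

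With the pointwise inequality in hand, I would take conditional expectation given $X^{(t)}=s$ and substitute the hypothesis~(\ref{eq:multidrift}), rewritten as $\EXP[X^{(t+1)}\bmid X^{(t)}=s]\le(1-\delta)s$, to obtain
\begin{equation*}
\EXP\big[g(X^{(t+1)})\bmid X^{(t)}=s\big]\le g(s)+\frac{\EXP[X^{(t+1)}\bmid X^{(t)}=s]-s}{s}\le g(s)-\delta,
\end{equation*}
and hence $\EXP[Y^{(t)}-Y^{(t+1)}\bmid X^{(t)}=s]\ge\delta$ for every $s\in S$ with $\Pr[X^{(t)}=s]>0$. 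Since the events $\{X^{(t)}=s\}$, $s\in S$, partition $\{T>t\}$, averaging these pointwise bounds against the conditional law of $X^{(t)}$ given $T>t$ yields the additive drift condition $\EXP[Y^{(t)}-Y^{(t+1)}\bmid T>t]\ge\delta$.

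Finally I would apply Theorem~\ref{thm:additivedrift} to $\{Y^{(t)}\}$ with drift parameter $\delta$. As $g$ is injective on $S$, conditioning on $Y^{(0)}=g(s_0)$ is the same as conditioning on $X^{(0)}=s_0$, and the additive theorem gives
\begin{equation*}
\EXP\big[T\bmid X^{(0)}=s_0\big]\le\frac{Y^{(0)}}{\delta}=\frac{g(s_0)}{\delta}=\frac{1+\ln(s_0/s_{\min})}{\delta},
\end{equation*}
which is the assertion. The only facts that genuinely need checking are the finiteness and positivity of the range of~$g$ (immediate) and the pointwise inequality above; everything else is a mechanical transfer through Theorem~\ref{thm:additivedrift}.
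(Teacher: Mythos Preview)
Your proposal is correct and follows essentially the same route as the paper: both define the rescaled potential $g(s)=1+\ln(s/s_{\min})$ (with $g(0)=0$), establish the pointwise bound $g(s)-g(x)\ge(s-x)/s$ by concavity of the logarithm together with a separate check at $x=0$ (where the ``$+1$'' is exactly what is needed), and then feed the resulting additive drift of size $\delta$ into Theorem~\ref{thm:additivedrift}. Your tangent-line phrasing of the key inequality is a slightly cleaner packaging of the paper's two-case computation, but the mathematical content is identical.
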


Like for the notion of additive drift, we say that the drift of a random process~$\{x^{(t)}\}_{t\in\mathbb{N}}$ with respect to a potential function~$g$ is \emph{multiplicative} if condition~(\ref{eq:multidrift}) holds for the associated random variables~$x^{(t)}:=g(x^{(t)})$.

The advantage of the multiplicative approach is that it allows to use potential functions which are more natural. The most natural potential function, obviously, is the distance of the objective value of the current solution to the optimum. This often is a good choice in the analysis of combinatorial optimization problems. For example, in Section~\ref{sec:combinatorial} we see that the runtimes of the (1+1)~EA on finding a minimum spanning tree, a shortest path tree, or an Euler tour can be bounded by analyzing this potential function.

Another potential function for which drift analysis has been successfully applied is the \emph{distance in the search space} between the current search points and a (global) optimum.

The typical example for this is the drift analysis for linear functions in Section~\ref{sec:linear}, where we use the (weighted) Hamming distance to the optimum as potential for all functions of this class. While being more difficult to analyze, this approach often gives tighter bounds which are independent of range of potential fitness values.

Note that multiplicative drift analysis applies to all situations where previously the so-called \emph{method of expected weight decrease} was used. This method also builds on the observation that if the drift is multiplicative (that is, condition~(\ref{eq:multidrift}) holds), then at time $t=(1+\ln(s_0/s_{\min}))/\delta$ the expected potential~$X^{(t)}$ is at most $s_0/\euler$. Afterwards, various methods (variants of Wald's identity in~\cite{DrosteJW02,Jagerskupper08} and Markov's inequality in~\cite{NeumannW07,BaswanaBDFKN09}) are used to show that the expected stopping time~$\EXP[T]$ is indeed in this regime. However, the bounds obtained in this way are not best possible. This is demonstrated in Section~\ref{sec:jagerskupper} where we replace for the proofs in~\cite{Jagerskupper08} the method of expected weight decrease by the above multiplicative drift theorem. This results in an immediate improvement of the leading constant in the main runtime bound of~\cite{Jagerskupper08}.

\begin{proof}[Proof of Theorem~\ref{thm:multidrift}]
Let~$g\colon S\to\mathbb{R}$ be the function defined by
\begin{equation*}
g(s):=1+\ln\frac{s}{s_{\min}}.
\end{equation*}

Let~$ R:=g(S)$ be the image of~$g$ and let $\{Z^{(t)}\}_{t\in\mathbb{N}}$ be the sequence of random variables over~$ R\cup\{0\}$ given by
\begin{equation*}
Z^{(t)}:=\begin{cases}
0 & \text{if }X^{(t)}=0,\\
g(X^{(t)}) & \text{otherwise.}
\end{cases}
\end{equation*}

Then~$T$ is also the first point in time~$t\in\mathbb{N}$ such that~$Z^{(t)}=0$. Suppose~$T>t$. Then we have~$Z^{(t)}=g(X^{(t)})>0$. If~$X^{(t+1)}=0$, then also~$Z^{(t+1)}=0$ and
\begin{equation}
\label{eq:multzero}
Z^{(t)}-Z^{(t+1)}=1+\ln\Big(\frac{X^{(t)}}{s_{\min}}\Big)\ge 1=\frac{X^{(t)}-X^{(t+1)}}{X^{(t)}}.
\end{equation}
Otherwise, $Z^{(t+1)}=g(X^{(t+1)})$ and again
\begin{equation}
\label{eq:multgeneral}
Z^{(t)}-Z^{(t+1)}=\ln\Big(\frac{X^{(t)}}{X^{(t+1)}}\Big)\ge\frac{X^{(t)}-X^{(t+1)}}{X^{(t)}},
\end{equation}
where the last inequality follows from
\begin{equation*}
\frac{u}{w}=1+\frac{u-w}{w}\le \euler{}^{\frac{u-w}{w}}
\end{equation*}
which implies
\begin{equation*}
\ln\Big(\frac{u}{w}\Big)\le\frac{u-w}{w}
\end{equation*}
and thus
\begin{equation*}
\ln\Big(\frac{w}{u}\Big)\ge\frac{w-u}{w}.
\end{equation*}
for all~$u,w\in\mathbb{R}$.

Hence, by~(\ref{eq:multzero}) and~(\ref{eq:multgeneral}), independent of whether~$Z^{(t+1)}=0$ or~$Z^{(t+1)}\neq 0$, we have
\begin{equation*}
Z^{(t)}-Z^{(t+1)}\ge\frac{X^{(t)}-X^{(t+1)}}{X^{(t)}}.
\end{equation*}

Let~$r\in R$. Since~$g$ is bijective, there exist a unique~$s\in S$ such that $r=g(s)$. Moreover, the events~$Z^{(t)}=r$ and~$X^{(t)}=s$ coincide. Hence, we have by condition~(\ref{eq:multidrift}) that
\begin{equation*}
\EXP[Z^{(t)}-Z^{(t+1)}\mid Z^{(t)}=r]\ge\frac{\EXP[X^{(t)}-X^{(t+1)}\mid X^{(t)}=s]}{s}\ge\delta.
\end{equation*}

Finally, we apply Theorem~\ref{thm:additivedrift} for additive drift and obtain for~$s\in S$ with~$\Pr[X^{(0)}=s]>0$ that
\begin{equation*}
\EXP[T\mid X^{(0)}=s]=\EXP[T\mid Z^{(0)}=g(s)]\le\frac{g(s)}{\delta}\le\frac{1+\ln(s/s_{\min})}{\delta}
\end{equation*}
which concludes the proof of the theorem.
\end{proof}

In Section~\ref{sec:linear} and Section~\ref{sec:combinatorial}, we demonstrate the strength of this new tool by applying it to four well-known problems: the problem of minimizing linear pseudo-Boolean functions, the minimum spanning tree problem, the single-source shortest path problem, and the problem of finding Euler tours.

\section{The Runtime of the (1+1)~Evolutionary Algorithm on Pseudo-Boolean Functions}%
\label{sec:linear}

Many optimization problems can be phrased as the problem of maximizing or minimizing a pseudo-Boolean function $f\colon \{0,1\}^n \to \mathbb{R}$ where~$n$ is a positive integer. In the setting of randomized search heuristics, such a function~$f$ is considered to be a black-box, that is, the optimization process can access~$f$ only by evaluating it at limited number of points in~$\{0,1\}^n$.

\begin{algorithm2e*}[t]
\label{alg:oneoneea}
choose $x^{(0)}\in\{0,1\}^n$ uniformly at random\;
\For{$t=0$ \textbf{\emph{to}} $\infty$}{%
sample $y^{(t)}\in\{0,1\}^n$ by flipping each bit in~$x^{(t)}$ with probability~$1/n$\;
\If{$f(y^{(t)})\le f(x^{(t)})$}{$x^{(t+1)}:=y^{(t)}$}
\Else{$x^{(t+1)}:=x^{(t)}$}
}
\caption{The (1+1)~Evolutionary Algorithm ((1+1)~EA) with mutation rate~$1/n$ for minimizing~$f\colon\{0,1\}^n\to\mathbb{R}$.}
\end{algorithm2e*}

In this section, we analyze the (1+1)~Evolutionary Algorithm ((1+1)~EA) for pseudo-Boolean functions (Algorithm~\ref{alg:oneoneea}). This algorithm follows the neighborhood structure imposed by the hypercube on~$\{0,1\}^n$ where two points are adjacent if they differ by exactly one bit, that is, if their Hamming distance is one.  The (1+1)~EA successively attempts to improve the so-far best search point by randomly sampling candidates over~$\{0,1\}^n$ according to probabilities decreasing with the distance to the current optimum.

The \emph{optimization time} of the (1+1)~EA on a function~$f$ is the random variable~$T$ that denotes the first point in time~$t\in\mathbb{N}$ such that~$f(x^{(t)})$ is minimal.

One elementary linear pseudo-Boolean function for which the optimization time (1+1)~EA has been analyzed (e.g., in~\cite{Muehlenbein1992} and~\cite{DrosteJW02}) is the function $\onemax\colon\{0,1\}^n\to\mathbb{N}$. This function simply counts the number of one-bits in~$x$, that is,
\begin{equation}
\label{eq:onemax}
\onemax(x):=|x|_1=\sum_{i=1}^n x_i.
\end{equation}
Unlike indicated by the name of this function, we are interested in the time the (1+1)~EA needs to find its minimum. Thus, in the selection step (Step~4) of each iteration, the (1+1)~EA accepts the candidate solution~$y^{(t)}$ if and only if the number of bits equal to $1$ does not increase. 

Consider the progress~$\Delta^{(t)}:=\onemax(x^{(t)})-\onemax(x^{(t+1)})$ of the (1+1)~EA in the $t$-th iteration. By construction of the (1+1)~EA, $\Delta^{(t)}$ cannot be negative. By definition, the number of one-bits~$x^{(t)}$ is~$\onemax(x^{(t)})$. For each of these one-bits, there is a $(1/n)(1-1/n)^{n-1}\ge 1/(\euler n)$ chance that only this one-bit is flipped when sampling~$y^{(t)}$, thus increasing the value of~$\onemax(x^{(t)})$ by one. Hence,
\begin{equation*}
\EXP\big[\Delta^{(t)}\bmid x^{(t)}\big]\ge\frac{\onemax(x^{(t)})}{\euler n}.
\end{equation*}

 Thus, multiplicative drift analysis (Theorem~\ref{thm:multidrift}) immediately gives us the well-known result
\begin{equation*}
\EXP[T_{\onemax}]\le\euler n\Big(1+\ln\EXP\big[\onemax(x^{(0)})\big]\Big)=\euler n\Big(1+\ln\Big(\frac{n}{2}\Big)\Big).
\end{equation*}

Another elementary linear pseudo-Boolean function is \binval. This function maps a bitstring to the binary value it represents (where~$x_1$ represents the lowest and~$x_n$ the highest bit).
\begin{equation}
\label{eq:binval}
\binval(x)=\sum_{i=1}^n{2^{i-1} x_i}.
\end{equation}

Again, for $\Delta^{(t)}:=\binval(x^{(t)})-\binval(x^{(t+1)})$, we have
\begin{equation*}
\EXP\big[\Delta^{(t)}\bmid x^{(t)}\big]\ge\frac{\binval(x^{(t)})}{\euler n}
\end{equation*}
and thus
\begin{equation*}
\EXP[T_{\binval}]\le\euler n\Big(1+\ln\EXP\big[\binval(x^{(0)})\big]\Big)=\euler n\Big(1+\ln\Big(\frac{2^n-1}{2}\Big)\Big).
\end{equation*}

Note, that the previous inequality gives us only a quadratic upper bound of~$\O(n^2)$ for the expected optimization time of the (1+1)~EA on~\binval. However, it is known that for all linear functions --- including \binval --- the expected optimization time of the (1+1)~EA is~$\O(n\ln n)$. We discuss this in the following subsections and give a simplified proof using multiplicative drift analysis.

\subsection{Linear Functions}

A classical test problem for the runtime analysis of randomized search heuristics is the minimization of \emph{linear functions}.

Let~$n\in\mathbb{N}$ be a positive integer. A function~$f\colon\{0,1\}^n\to\mathbb{R}$ on~$n$ bits is \emph{linear}, if there exists weights~$w_1,\dots w_n \in \mathbb{R}$ such that
\begin{equation*}
f(x)=\sum_{i=1}^n w_i x_i
\end{equation*}
for all~$x\in\{0,1\}^n$. In~\cite{DrosteJW02} it has been argued and it is easily seen that in the analysis of upper bounds of the expected optimization time of the (1+1)~EA on linear functions we may assume without loss of generality that the weights~$w_i$ are all positive and sorted, that is, 
\begin{equation}
\label{eq:monotone}
0<w_1\le w_2\le\dots\le w_n.
\end{equation}
We simply call such weights \emph{monotone}. Moreover, for the runtime bounds we consider in this work it does not matter whether the (1+1)~EA minimizes or maximizes the linear function. This is true since maximizing a function $f$ is equivalent to minimizing $-f$ and vice versa (for~$-f$ we again have to invoke above argument which allows us to assume monotonicity of the weights).

Thus, from now on, we suppose that every linear function satisfies condition~(\ref{eq:monotone}). Furthermore, we formulate all results for the minimization problem, even if the referenced results originally considered the problem of maximizing linear functions.

We have already seen two prominent examples of linear functions, namely the functions \onemax and \binval. When minimizing \onemax, the (1+1)~EA accepts a new bit string in the selection step (Step~4) if the number of one-bits did not increase. In contrast, when minimizing \binval, the inequality $2^k > \sum_{i=1}^{k-1}{2^{i-1}}$ implies that the (1+1)~EA accepts a new bit string if and only if the highest-index bit that is touched in the mutation step (Step~3) is flipped from one to zero. 

In spite of this difference in behavior, Droste, Jansen and Wegener showed in their seminal paper~\cite{DrosteJW02} that for all linear functions the expected optimization time of the (1+1)~EA is $\Theta(n \log n)$.

\begin{theorem}[\cite{DrosteJW02}]
\label{thm:ealinear}
For all positive integers~$n\in\mathbb{N}$, the expected running time of the (1+1)~EA on the class of linear functions with non-zero weights is $\Theta(n \log n)$.
\end{theorem}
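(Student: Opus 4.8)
The plan is to prove the two bounds separately: an $\O(n\log n)$ upper bound by the multiplicative drift theorem (Theorem~\ref{thm:multidrift}), and a matching $\Omega(n\log n)$ lower bound by a coupon-collector argument. For the upper bound I would use the announced potential $g(x)=\sum_{i=1}^n(1+\frac{i}{n})x_i$, which is positive exactly on the non-optimal points (the optimum being $x=0$, as all weights are non-zero) and whose positive values lie in $[1+\frac1n,\,2n)$, since $g(1,\dots,1)=n+\frac{n+1}{2}<2n$. Writing $X^{(t)}:=g(x^{(t)})$, everything reduces to a single drift lemma: there is a constant $\delta>0$ with $\EXP[g(x^{(t)})-g(x^{(t+1)})\mid x^{(t)}=x]\ge \frac{\delta}{n}\,g(x)$ for every $x\neq 0$, uniformly over all linear $f$ obeying~(\ref{eq:monotone}). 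Granting this, $X^{(t)}$ satisfies condition~(\ref{eq:multidrift}) with rate $\delta/n$ and $s_{\min}=1+\frac1n$, so Theorem~\ref{thm:multidrift} gives $\EXP[T\mid X^{(0)}=s_0]\le (1+\ln(s_0/s_{\min}))\,n/\delta$. As $\ln(s_0/s_{\min})\le\ln(2n)$ holds deterministically, this is $\O(n\log n)$ regardless of the random start.

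The heart of the matter is the drift lemma, which I would attack by splitting the expected one-step change of $g$ according to the number of flipped bits. The events flipping exactly one bit already supply a clean lower bound $\frac1n(1-\frac1n)^{n-1}\sum_{i:\,x_i=1}(1+\frac{i}{n})\ge g(x)/(\euler n)$: flipping a single one-bit $i$ to zero strictly decreases $f$, hence is accepted, and decreases $g$ by $1+\frac{i}{n}$, whereas flipping a single zero-bit raises $f$ and is rejected. It then remains to show that the combined contribution $R$ of all steps flipping two or more bits satisfies $R\ge -c\,g(x)/n$ for some constant $c<1/\euler$, leaving a net drift of at least $(1/\euler-c)\,g(x)/n$.

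Bounding $R$ from below is the main obstacle, and it is precisely where the interaction between selection (governed by $f$) and the potential (governed by $g$) must be controlled. The difficulty is that an accepted multi-bit flip can \emph{increase} $g$: for rapidly growing weights such as those of \binval, a step that switches off a single heavy one-bit $i$ while switching on lighter zero-bits $j_1,\dots,j_k$ with $\sum_\ell w_{j_\ell}\le w_i$ is accepted yet raises $g$ by $(k-1)+\frac1n\big(\sum_\ell j_\ell-i\big)$, which is positive once $k\ge 2$. (For two-bit flips the monotonicity~(\ref{eq:monotone}) forces $j_1\le i$ away from weight ties, making the change harmless, so the genuine threat comes from $k\ge 2$ switched-on bits.) The plan is to charge these harmful steps against the single-bit gain: the acceptance condition $\sum_\ell w_{j_\ell}\le w_i$ together with~(\ref{eq:monotone}) forces every switched-on bit to be paid for by switched-off bits of at least equal weight, and, crucially, because the weights of $g$ all lie in the narrow band $[1,2]$, this caps the $g$-increase of any accepted step in terms of the one-bits it removes. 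Combining this cap with the fact that a prescribed set of $k\ge 2$ bits flips with probability only $n^{-k}$, one bounds $R\ge -c\,g(x)/n$ with $c<1/\euler$; the small spread of the weights $1+\frac{i}{n}$ is exactly what makes the constant come out below $1/\euler$. This establishes the drift lemma and hence the upper bound.

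For the lower bound, observe that a linear function with non-zero weights has the unique optimum $x=0$, so every bit that is initially one must be flipped to zero at least once before the algorithm stops. A Chernoff bound shows that with high probability at least, say, $n/3$ bits start at one; each such bit flips in a given step only with probability $1/n$, so these are the $\Theta(n)$ coupons of a coupon-collector process, and the expected time until the last of them has been flipped is $\Omega(n\log n)$. Since this time is a lower bound on $T$, we get $\EXP[T]=\Omega(n\log n)$, which together with the upper bound yields $\Theta(n\log n)$.
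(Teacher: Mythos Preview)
Your overall plan matches the paper's: the potential $g(x)=\sum_i(1+\tfrac{i}{n})x_i$ together with Theorem~\ref{thm:multidrift} is exactly how Lemma~\ref{lem:pointwise} is used, and the coupon-collector lower bound (which the paper simply cites from~\cite{DrosteJW02}) is correct and standard.

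The gap is in your sketch of the drift lemma. Your claim that acceptance ``forces every switched-on bit to be paid for by switched-off bits of at least equal weight'' holds only when a \emph{single} one-bit is switched off; for several off-bits the acceptance condition $\sum_{\text{on}}w_j\le\sum_{\text{off}}w_i$ gives no such matching. Even in the single-off case, the resulting cap ``$g$-increase $\le 2(k-1)$ for $k$ switched-on bits'' combined with the crude count of configurations and the $n^{-(k+1)}$ probabilities yields a negative contribution of order $|I|/n$, which is \emph{not} below $g(x)/(\euler n)$: summing $\sum_{k\ge 2}\tfrac{2(k-1)}{k!}=2$ already overshoots. The ``small spread of the $g$-weights'' does not by itself force $c<1/\euler$; you would still have to exploit that most such configurations are rejected or have far smaller $g$-loss than the worst case, and you also still owe a bound for the two-bit tie case you set aside in a parenthetical.

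The paper sidesteps all of this by conditioning on the number of flipped \emph{one}-bits rather than the total. If at least two one-bits flip (event $C_3$), their $g$-decrease is at least~$2$ while the expected $g$-increase from flipping zero-bits is at most $\tfrac{1}{n}\sum_{i\notin I}(1+\tfrac{i}{n})\le 2$, so $\EXP[g(x)-g(y)\mid C_3]\ge 0$ with no further work. The only delicate case is then exactly one flipped one-bit ($C_2$), which the paper splits into $A_i$ (no higher-index zero-bit flips; always accepted, positive contribution) and $B_i$ (some higher-index zero-bit flips; negative, but accepted only under weight ties, and then bounded explicitly). Summing $A_i$ against $B_i$ over $i\in I$ produces the constant $\tfrac{1}{4\euler}$. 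This decomposition is what makes the argument close; your split by total flipped bits does not isolate a case that is trivially nonnegative, which is why your bookkeeping for $R$ does not converge to a usable constant.
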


The proof of Droste, Jansen and Wegener applies a level based argument to the potential function (called \emph{artificial fitness function}) $g\colon \{0,1\}^n \to \mathbb{R}$ such that for all~$x\in\{0,1\}^n$
\begin{equation}
\label{eq:droste}
g(x)=\sum_{i=1}^{\lfloor \frac{n}{2}\rfloor}x_i+\sum_{i=\lfloor \frac{n}{2}\rfloor +1}^n 2x_i.
\end{equation}

A much easier proof avoiding partitioning arguments and instead working completely in the framework of drift analysis, was given by He and Yao in~\cite{HeY04}. There, additive drift analysis is applied to the potential function $\widetilde{g}\colon \{0,1\}^n \to \mathbb{R}$ such that
\begin{equation}
\label{eq:heyao}
\widetilde{g}(x)=\ln\Big(1+\sum_{i=1}^{\lfloor \frac{n}{2}\rfloor}x_i+\sum_{i=\lfloor \frac{n}{2}\rfloor +1}^n c x_i\Big)
\end{equation} 
 for all~$x\in\{0,1\}^n$. For this function, with~$1<c\le 2$ chosen arbitrarily, they show that for all~$x\in\{0,1\}^n\setminus\{(0,\dots,0)\}$
\begin{equation*}
\EXP\big[\widetilde{g}(x^{(t)})-\widetilde{g}(x^{(t+1)})\bmid x^{(t)}=x\big]=\Omega(1/n).
\end{equation*}
Afterwards, they apply Theorem~\ref{thm:additivedrift} to show Theorem~\ref{thm:ealinear}. However, while this approach strongly reduced the complexity of the proof in~\cite{DrosteJW02}, introducing the natural logarithm into the potential function still resulted in unnecessary case distinctions and even inconsistencies in an early version of the proof~\cite{HeY01,HeY02AI}.

\subsection{The Drift for Linear Functions is Multiplicative}

In this subsection, we give a simple proof of the fact that the (1+1)~EA optimizes any linear function in expected time $\O(n \log n)$. Our proof is based on the theorem of multiplicative drift (Theorem~\ref{thm:multidrift}). Although proofs for Theorem~\ref{thm:ealinear} are known \cite{DrosteJW02,HeY04,Jagerskupper08}, we present this alternative approach to demonstrate the strength of the multiplicative version of the classical drift theorem. 

In order to apply Theorem~\ref{thm:multidrift} we need a suitable potential function. For this, we choose the function $g\colon \{0,1\}^n \to \mathbb{R}$ such that\footnote{We might as well perform our analysis of~$g$ as defined in~(\ref{eq:droste}). However, our choice of~$g$ does not make the somewhat artificial binary distinction between bits with high and low indices and, thus, seems to be more natural.}

\begin{equation*}
g(x)=\sum_{i=1}^n\Big(1+\frac{i}{n}\Big)x_i
\end{equation*} 
for all~$x\in\{0,1\}^n$. This function defines the potential as the \emph{weighted} distance of the current search point to the optimum (the all-zero string) in the search space. More precisely, it counts the number of one-bits, where each bit is assigned a weight between one and two, such that bits which have higher weight in the objective function~$f$ also have higher weight in~$g$. 

We show that the drift of the (1+1)~EA with respect to~$g$ is multiplicative, that is, that condition~(\ref{eq:multidrift}) holds.

\begin{lemma}
\label{lem:pointwise}
Let~$n\in\mathbb{N}$ be a positive integer. Let~$f\colon\{0,1\}^n\to\mathbb{R}$ be a linear function with monotone weights and let~$g\colon\{0,1\}^n\to\mathbb{R}$ be the potential function with $g(x)=\sum_{i=1}^n(1+i/n)x_i$ for all~$x\in\{0,1\}^n$.

Let~$x\in\{0,1\}^n$ and let~$y\in\{0,1\}^n$ be randomly chosen by flipping each bit in~$x$ with probability~$1/n$. Let $\Delta(x):=g(x)-g(y)$ if $f(y)\le f(x)$ and $\Delta(x)=0$ otherwise. Then
\begin{equation*}
\EXP[\Delta(x)]\ge\frac{g(x)}{4\euler n}. 
\end{equation*}
\end{lemma}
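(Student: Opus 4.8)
The plan is to bound the expected drift from below by isolating the moves that are guaranteed to help and then controlling the rare moves that hurt. Throughout write $\Delta=\Delta(x)$ and assume $x\neq(0,\dots,0)$ (otherwise $g(x)=0$ and there is nothing to show). Let $F_1$ denote the (random) set of one-bits of~$x$ that flip to~$0$ and $F_0$ the set of zero-bits that flip to~$1$, so that $g(x)-g(y)=\sum_{i\in F_1}(1+i/n)-\sum_{j\in F_0}(1+j/n)$ and $y$ is accepted exactly when $\sum_{j\in F_0}w_j\le\sum_{i\in F_1}w_i$. I deliberately avoid splitting $\EXP[\Delta]$ into a one-bit and a zero-bit sum and bounding each separately: both sums are of order $g(x)/n$ and largely cancel, so that approach is too lossy. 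Instead I condition on whether any zero-bit flips.

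First, the positive contribution. Condition on the event~$A=\{F_0=\emptyset\}$ that no zero-bit flips. On~$A$ every flipped bit is a one-bit, so $f(y)\le f(x)$ holds automatically and $g(x)-g(y)=\sum_{i\in F_1}(1+i/n)\ge0$. Since the flips of the zero-bits are independent of those of the one-bits,
\[
\EXP[\Delta\,\mathbf 1_A]=\Pr[A]\cdot\EXP\Big[\sum_{i\in F_1}\big(1+\tfrac in\big)\Big]=\Big(1-\tfrac1n\Big)^{n-|x|_1}\cdot\frac1n\sum_{i:\,x_i=1}\Big(1+\frac in\Big).
\]
Because $|x|_1\ge1$, the exponent is at most $n-1$ and $(1-1/n)^{n-1}\ge1/\euler$, so $\EXP[\Delta\,\mathbf 1_A]\ge g(x)/(\euler n)$.

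Second, I must show that the remaining moves cannot destroy too much of this gain, i.e.\ that $\EXP[\Delta\,\mathbf 1_{\bar A}]\ge-\tfrac34\,g(x)/(\euler n)$ with $\bar A=\{F_0\neq\emptyset\}$. Accepted moves on which $g$ decreases only help, so it suffices to bound the expected \emph{net increase} of~$g$ over accepted moves flipping at least one zero-bit. Here the monotonicity of the weights enters: an accepted move with $F_0\neq\emptyset$ on which $g$ strictly increases must flip at least two zero-bits. Indeed, if $|F_0|=1$, say $F_0=\{j\}$, then acceptance forces $F_1\neq\emptyset$; if $|F_1|\ge2$ then $\sum_{i\in F_1}(1+i/n)\ge2\ge1+j/n$, whereas if $F_1=\{i\}$ then acceptance gives $w_j\le w_i$, hence $j\le i$ by monotonicity and $g(x)-g(y)=(i-j)/n\ge0$ (the degenerate ties $w_i=w_j$ with $i<j$ contribute only a lower-order term, bounded separately). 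Thus a harmful move flips at least two zero-bits and, by acceptance, at least one one-bit, touching at least three bits, which has probability $O(n^{-3})$ per configuration.

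The crux is to turn this observation into the quantitative bound $\tfrac34\,g(x)/(\euler n)$. The plan is to estimate the expected net increase of~$g$ as a sum over the flipping zero-bits, using $g(x)-g(y)\ge-\sum_{j\in F_0}(1+j/n)$ together with the constraint $\sum_{j\in F_0}w_j\le\sum_{i\in F_1}w_i$ coming from acceptance. It is precisely this weight constraint — not a mere counting of flips — that keeps the estimate small: by monotonicity the flipped zero-bits can be charged against flipped one-bits of at least equal weight, so the many low-weight zero-bits that would be needed to outweigh a flipped one-bit are simultaneously improbable (they need several coincident flips) and individually cheap in~$g$. I expect this to be the main obstacle, both because one must marry the probabilistic rarity of multi-bit flips with the arithmetic control supplied by the monotone weights, and because the target leaves only a constant factor of slack, so the estimates cannot be wasteful. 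Once $\EXP[\Delta\,\mathbf 1_{\bar A}]\ge-\tfrac34\,g(x)/(\euler n)$ is in hand, adding it to the first bound yields $\EXP[\Delta]\ge g(x)/(\euler n)-\tfrac34\,g(x)/(\euler n)=g(x)/(4\euler n)$, as claimed.
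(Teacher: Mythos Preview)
Your decomposition (condition on whether any zero-bit flips) is genuinely different from the paper's (condition on how many one-bits flip: none, exactly one, or at least two), and your bound $\EXP[\Delta\,\mathbf 1_A]\ge g(x)/(\euler n)$ on the event $A=\{F_0=\emptyset\}$ is correct and clean. The trouble is entirely on $\bar A$, and here the proposal contains a real gap.

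The parenthetical claim that the tie case ``contributes only a lower-order term'' is false, and your whole plan for $\bar A$ rests on it. Take $f=\onemax$ and $x=(1,0,\dots,0)$, so $g(x)=1+1/n$. Every two-bit move with $F_1=\{1\}$ and $F_0=\{j\}$, $j\ge2$, is accepted (all weights are equal) and has $\Delta=(1-j)/n<0$. Summing,
\[
\sum_{j=2}^{n}\frac{1}{n^{2}}\Big(1-\frac{1}{n}\Big)^{n-2}\frac{1-j}{n}
=-\Big(1-\frac{1}{n}\Big)^{n-2}\frac{n-1}{2n^{2}}
\approx-\frac{1}{2\euler n}\approx-\frac{g(x)}{2\euler n},
\]
which is of the \emph{same} order as your positive term $g(x)/(\euler n)$, not a lower-order correction. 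For equal-weight functions these two-bit swaps are in fact the dominant negative contribution. Consequently the inference ``a harmful move flips at least two zero-bits \dots\ touching at least three bits'' is exactly wrong in the case that matters: when your monotonicity deduction $w_j\le w_i\Rightarrow j\le i$ fails because $w_i=w_j$, harmful moves touch only two bits, have probability $\Theta(n^{-2})$ each, and there can be $\Theta(n)$ of them per one-bit.

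The paper meets this head-on. It conditions on exactly one one-bit $i$ flipping and splits that event into $A_i$ (no zero-bit above $i$ flips) and $B_i$ (some zero-bit $j>i$ with $w_j=w_i$ flips---precisely your ties). It then bounds the $B_i$-loss by $(n+1-i)/(2n)$ and pairs it, position by position, against the $A_i$-gain $1-(i-3)/(2n)$; the sum over $i\in I$ of these paired terms is what produces the constant $1/4$. If you want to keep your decomposition, you will need an analogous per-position accounting for the two-bit ties before the genuinely three-bit events (your $|F_0|\ge2$ cases) can be absorbed; the charging heuristic you describe in the last paragraph is not yet such an argument.
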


This lemma implies that \emph{at every point in the search space} the drift is at least linear in the current potential value. Thus, the multiplicative drift condition~(\ref{eq:multidrift}) holds and Theorem~\ref{thm:ealinear} follows directly by applying Theorem~\ref{thm:multidrift}.

\begin{proof}[Proof of Lemma~\ref{lem:pointwise}]
Since~$\EXP[\Delta(x)\mid f(y)>f(x)]=0$, we have by the law of total expectation that
\begin{equation}
\label{eq:accepted}
\EXP[\Delta(x)]=\EXP[g(x)-g(y)\mid f(y)\le f(x)]\,\Pr[f(y)\le f(x)].
\end{equation}

Let~$I=\{i\in\{1,\dots,n\}\colon x_i=1\}$. We may distinguish three events (cases).
\begin{itemize}
\item[($C_1$)] There is no index $i\in I$ such that $y_i=0$ and~$f(y)\le f(x)$ holds, that is, $x=y$.
\item[($C_2$)] There is exactly one index $i\in I$ such that $y_i=0$ and~$f(y)\le f(x)$ holds.
\item[($C_3$)] There are at least two different indices $j,\ell\in I$ such that $y_j=0$ and $y_\ell=0$ and~$f(y)\le f(x)$ holds.
\end{itemize}

The only possibility for the event~($C_1$) to hold is if~$x=y$. Therefore,
\begin{equation}
\label{eq:casea}
\EXP[g(x)-g(y)\mid (C_1)]=0.
\end{equation}

Next, suppose the event~($C_3$) holds. By linearity of expectation, we have
\[
\EXP[g(x)-g(y)\mid (C_3)]=\sum_{i=1}^n
\EXP[g(x_i)-g(y_i)\mid (C_3)].
\]

On the one hand, the event~($C_3$) implies that there are (at least) two indices~$j$ and~$\ell$ in~$\{1,\dots,n\}$  for which~$x_j=x_\ell=1$ and~$y_j=y_\ell=1$. Since~$g_j\ge 1$ and~$g_\ell\ge 1$, we have
\[
\sum_{i\in I}\EXP[g(x_i)-g(y_i)\mid (C_3)]\ge 2.
\]

On the other hand, if~$i\in\{1,\dots,n\}\setminus I$ then
\[
\EXP[g(x_i)-g(y_i)\mid (C_3)]=-g_i\Pr[y_1=0\mid(C_3)]\ge -\frac{g_i}{n},
\]
since the condition~($C_3$) does not increase the probability of~$1/n$ that the $y_i=0$. Therefore, since the~$g_i$'s are at most two, we have
\begin{equation}
\label{eq:casec}
\EXP[g(x)-g(y)\mid (C_3)]\ge 2-\frac{1}{n}\sum_{i\notin I}g_i\ge 0.
\end{equation}

Therefore, by the law of total expectation and by~(\ref{eq:accepted}), (\ref{eq:casea}) and~(\ref{eq:casec}), we have
\begin{equation}
\label{eq:exactone}
\EXP[\Delta(x)]\ge\EXP[g(x)-g(y)\mid (C_2)]\,\Pr[(C_2)]
\end{equation}
and can focus on the event ($C_2$).

Suppose that ($C_2$) holds.  For every~$i\in I$, we distinguish two events:
\begin{itemize}
\item[($A_i$)] The $i$-th bit is the only one-bit in~$x$ that flips, none of the zero-bits at the positions larger than~$i$ flips, and $f(y)\le f(x)$ holds.
\item[($B_i$)]  The $i$-th bit is the only one-bit in~$x$ that flips, at least one of the zero-bits at the positions larger than~$i$ that flips, and $f(y)\le f(x)$ holds.
\end{itemize}
We substitute the right side in~(\ref{eq:exactone}) and obtain
\begin{equation}
\label{eq:ab}
\EXP[\Delta(x)]\ge\sum_{i\in I}\EXP[\Delta(x)\mid (A_i)]\,\Pr[(A_i)]+\EXP[\Delta(x)\mid (B_i)]\,\Pr[(B_i)]
\end{equation}

Let~$i\in I$ and suppose that the condition~$(A_i)$ holds. Then we have $y_i=0$ and $y_j=x_j$ for all~$j>i$. For a lower bound on $\EXP[\Delta(x)\mid (A_i)]$, we may suppose that~$x_j=0$ for all~$j<i$ and that every flip of a bit with index~$j<i$ is accepted. Therefore, since~$i\le n$
\begin{equation*}
\EXP[\Delta(x)\mid (A_i)]\ge 1+\frac{i}{n}-\sum_{j=1}^{i-1}\frac{1}{n}\Big(1+\frac{j}{n}\Big)=1+\frac{1}{n}-\frac{i(i-1)}{2n^2}\ge 1-\frac{i-3}{2n}.
\end{equation*}
and thus~$\EXP[\Delta(x)\mid (A_i)]$ is positive. Furthermore, $\Pr[(A_i)]\ge\frac{1}{n}\big(1-\frac{1}{n}\big)^{n-1}$ which is the probability that only the $i$-th bit flips. Hence,
\begin{equation}
\label{eq:ai}
\EXP[\Delta(x)\mid (A_i)]\,\Pr[(A_i)]\ge \frac{1}{n}\Big(1-\frac{1}{n}\Big)^{n-1}\Big(1-\frac{i-3}{2n}\Big).
\end{equation}

Next, suppose that condition~$(B_i)$ holds. Then we have $y_i=0$ and $y_\ell=1$ for all~$\ell\in I\setminus\{i\}$, and there exists a~$j>i$ with~$j\notin I$ such that~$y_j=1$. In order to satisfy~$f(y)\le f(x)$, $w_j=w_i$ has to hold. This implies~$x_\ell=y_\ell$ for all~$\ell\in\{1,\dots,n\}\setminus\{i,j\}$. To see this, recall that the $w_i$'s are monotone and we condition on the event that the $i$-th bit is the only bit that flips from one to zero.

Let~$J(i)=\{j\in\{i+1,\dots,n\}\colon x_j=0\text{ and }w_j=w_i\}$. For~$j\in J(i)$ let~$B_{i,j}$ be the event that $y_i=0$, $y_j=1$, and~$y_\ell=x_\ell$ for~$\ell$ not~$i$ or~$j$. Then
\begin{equation*}
\EXP[\Delta(x)\mid (B_i)]\,\Pr[(B_i)]=\sum_{j\in J(i)}\EXP[\Delta(x)\mid B_{i,j}]\,\Pr[B_{i,j}].
\end{equation*}
We substitute $\EXP[\Delta(x)\mid B_{i,j}]=-\frac{j-i}{n}$ and $\Pr[B_{i,j}]=\frac{1}{n^2}\big(1-\frac{1}{n}\big)^{n-2}$ in the previous equation. Since these conditional expectations are always negative, we may pessimistically assume that $J(i)=\{i+1,\dots,n\}$ and get
\begin{equation*}
\sum_{j\in J(i)}\EXP[\Delta(x)\mid B_{i,j}]=-\frac{(n-i)}{n}\,\frac{(n+1-i)}{2}\ge -\Big(1-\frac{1}{n}\Big)\frac{(n+1-i)}{2}
\end{equation*}
and therefore
\begin{equation}
\label{eq:bi}
\EXP[\Delta(x)\mid (B_i)]\,\Pr[(B_i)]\ge -\frac{1}{n}\Big(1-\frac{1}{n}\Big)^{n-1}\frac{n+1-i}{2n}.
\end{equation}

Finally, we substitute~(\ref{eq:ai}) and~(\ref{eq:bi}) in~(\ref{eq:ab}) and derive
\[
\EXP[\Delta(x)]\ge\frac{1}{n}\Big(1-\frac{1}{n}\Big)^{n-1}\sum_{i\in I}
1-\frac{i-3}{2n}-\frac{n+1-i}{2n}=\frac{1}{n}\Big(1-\frac{1}{n}\Big)^{n-1}\frac{n+2}{4n}\sum_{i\in I}2.
\]
Since~$g_i=1+i/n\le 2$ for all~$i\in I$, we have $\sum_{i\in I}2\ge g(x)$ and therefore
\[
\EXP[\Delta(x)]\ge \frac{g(x)}{4\euler{}n}
\]
which concludes the proof of the lemma.

\end{proof}

\subsection{Distribution-based Versus Point-wise Drift}
\label{sec:jagerskupper}

In this subsection we show an almost tight upper bound on the expected optimization time of the (1+1)~EA on linear functions.

If we take a closer look at Lemma~\ref{lem:pointwise}, we see that it holds \emph{point-wise}, that is, it guarantees
\begin{equation}
\label{eq:pointwise}
\EXP\big[g(x^{(t)})-g(x^{t+1})\bmid x^{(t)}=x\big]\ge\frac{g(x)}{4\euler n}
\end{equation}
for all~$x\in\{0,1\}^n\setminus\{(0,\dots,0)\}$. This is far stronger than the positive \emph{average} drift condition~(\ref{eq:multidrift}) which only requires

\begin{equation}
\label{eq:average}
\EXP\big[g(x^{(t)})-g(x^{t+1})\bmid g(x^{(t)})=s, T>t\big]\ge\delta s
\end{equation}
for all~$s\in\mathbb{R}$ such that $\Pr[g(x^{(t)})=s, T>t]>0$.

The advantage of the stronger point-wise drift assumption is that it immediately guarantees that the result of Theorem~\ref{thm:ealinear} holds for all initial individuals. 

The main reason, however, for not using the weaker condition~(\ref{eq:average}) is that this requires a deeper understanding of the probability distribution of~$x^{(t)}$.

Let us stress that finding a potential function satisfying the stronger point-wise drift condition is usually very tricky. For example, one may ask why not take $\onemax(x)$ as potential function to bound the expected optimization time of the (1+1)~EA for minimizing linear functions. 

However, an easy observation reveals that there is an objective function~$f$ and a search point $x$ such that $g$ yields to small a drift with respect to $f$. To see this, let $x=(x_1,\dots,x_n):=(0,\dots,0,1)$ and let $f:=\binval$ be the function to be minimized. Then the point-wise drift (\ref{eq:pointwise}) with respect to $\onemax$ is only~$1/n^2$. This example shows that finding a potential function yielding point-wise drift for all $x$ and all $f$ may be difficult. This observation is not to be confused with that in the discussion following~(\ref{eq:binval}). There, we determined the drift using the function \binval itself as potential. Here, we use  \onemax, that is, the 1-norm as potential function.

J{\"a}gersk{\"u}pper~\cite{Jagerskupper08} was the first to overcome the difficulties of point-wise drift. While he still avoids completely analyzing the actual distribution of~$x^{(t)}$, he does show the following property of this distribution which in turn allows him to use an average drift approach. In this way, he omits the need for point-wise drift. J\"agersk\"upper's simple observation is that at any time step $t$, the more valuable bits are more likely to be in the right setting (cf. Theorem~1 in \cite{Jagerskupper08}).

\begin{theorem}[\cite{Jagerskupper08}]
\label{thm:jagerskupper}
Let~$n\in\mathbb{N}$ be a positive integer and let $x^{(t)}$ denote the random individual (distributed over $\{0,1\}^n$) after $t\in\mathbb{N}$ iterations of the (1+1)~EA minimizing a linear function $f\colon\{0,1\}^n\to\mathbb{R}$. Then
\begin{equation*}
\Pr[x^{(t)}_1 = 0]\le\dots\le\Pr[x^{(t)}_n = 0].
\end{equation*}
Moreover, for all~$k\in\{0,\dots,n\}$, this statement remains true if we condition on~$\onemax(x)=k$.
\end{theorem}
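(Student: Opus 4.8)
The plan is to establish the stochastic ordering $\Pr[x^{(t)}_1=0]\le\dots\le\Pr[x^{(t)}_n=0]$ by induction on $t$, since the claim is clearly true for $t=0$ (where each bit is $0$ with probability $1/2$, so all probabilities are equal). The heart of the argument is a single inductive step: assuming the inequality holds at time $t$, I would show it persists at time $t+1$. The natural tool for comparing $\Pr[x^{(t+1)}_i=0]$ and $\Pr[x^{(t+1)}_{i+1}=0]$ for adjacent indices $i<i+1$ with $w_i\le w_{i+1}$ is a coupling or swapping argument: I would compare the actual process to the ``mirror'' process obtained by swapping the roles of bits $i$ and $i+1$, and argue that the selection step (Step~4), which accepts $y^{(t)}$ iff $f(y^{(t)})\le f(x^{(t)})$, favors keeping the heavier bit $i+1$ at zero at least as strongly as it favors keeping the lighter bit $i$ at zero.

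Concretely, I would fix all bits other than $i$ and $i+1$ and condition on their values, reducing to a two-bit analysis. For each of the four configurations of $(x^{(t)}_i,x^{(t)}_{i+1})$, I would track how a single mutation step transforms the pair $(x_i,x_{i+1})$ into $(y_i,y_{i+1})$ and whether the result is accepted. The key monotonicity input is that since $w_i\le w_{i+1}$, whenever a mutation that sets the heavier bit to zero (lowering $f$) competes against the symmetric mutation that sets the lighter bit to zero, the former is accepted whenever the latter is. I would then set up an involution on the mutation outcomes that maps each outcome contributing to $x^{(t+1)}_i=0$ to an outcome contributing to $x^{(t+1)}_{i+1}=0$ of at least equal probability, preserving the ordering. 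An alternative, and perhaps cleaner, route is to prove the stronger statement that the pair $(x^{(t)}_i,x^{(t)}_{i+1})$ is stochastically ordered so that $\Pr[x^{(t)}_i=0,x^{(t)}_{i+1}=1]\ge\Pr[x^{(t)}_i=1,x^{(t)}_{i+1}=0]$, which directly yields the marginal inequality and which may propagate more transparently through the update.

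The main obstacle will be handling the selection step correctly, because acceptance of $y^{(t)}$ depends on the \emph{global} value $f(y^{(t)})\le f(x^{(t)})$, not merely on the two bits $i,i+1$. Thus the two-bit reduction requires care: the contribution of the other bits to $f$ enters as a random threshold against which the change in the $i,i+1$ block is measured. I would handle this by conditioning on the mutation outcome of all bits outside $\{i,i+1\}$, so that the induced change $f(y^{(t)})-f(x^{(t)})$ restricted to the other coordinates becomes a fixed quantity $c$, and then the acceptance condition for the block becomes $w_i(y_i-x_i)+w_{i+1}(y_{i+1}-x_{i+1})\le c$. Because the block coefficients satisfy $w_i\le w_{i+1}$, this thresholded comparison respects the desired asymmetry, and the coupling closes. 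The final sentence of the theorem---that the ordering survives conditioning on $\onemax(x)=k$---should follow from the same coupling, since the swap of bits $i$ and $i+1$ preserves the total number of one-bits; I would note that the involution used above maps the event $\{x^{(t)}_i=0,\onemax=k\}$ to $\{x^{(t)}_{i+1}=0,\onemax=k\}$ within the same $\onemax$-level, so the argument applies verbatim after conditioning.
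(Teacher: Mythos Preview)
The paper does not prove Theorem~\ref{thm:jagerskupper}; it is quoted from J\"agersk\"upper~\cite{Jagerskupper08} (as Theorem~1 there) and used as a black box to obtain Lemma~\ref{lem:drift} and, via multiplicative drift, Theorem~\ref{thm:upperbound}. So there is no in-paper proof to compare your proposal against.

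That said, your sketch has the right instincts---induction on~$t$, swapping/coupling adjacent coordinates, and exploiting that $w_i\le w_{i+1}$ makes selection favor zeroing the heavier bit---but the specific plan contains a gap. Your inductive hypothesis is a statement about the \emph{marginals} of $x^{(t)}$ (or, in your ``stronger'' version, about the marginal of the pair $(x^{(t)}_i,x^{(t)}_{i+1})$). Yet your inductive step conditions on the values of all other bits at time~$t$ and on their mutation outcomes. After such conditioning, neither the marginal ordering nor the pairwise inequality $\Pr[x^{(t)}_i=0,x^{(t)}_{i+1}=1]\ge\Pr[x^{(t)}_i=1,x^{(t)}_{i+1}=0]$ is guaranteed to survive: these are unconditional statements, and the bits are not independent once the process has run for a few steps. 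In short, the hypothesis you carry is too weak to justify the two-bit reduction you want to perform.

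To close the argument you would need a stronger invariant, for instance that the entire law of $x^{(t)}$ is invariant under the swap $\sigma_{i,i+1}$ up to a coupling in which, whenever the two coupled trajectories differ on $\{i,i+1\}$, the true process has the heavier bit at zero. Equivalently, one may couple the (1+1)~EA on $f$ with the (1+1)~EA on $f\circ\sigma_{i,i+1}$ using identical initialization and mutation randomness, and maintain inductively a pointwise relation between the two trajectories that survives each selection step. This kind of global process-level coupling is what is actually needed; your per-step conditioning on the other bits does not substitute for it. The observation that $\sigma_{i,i+1}$ preserves $\onemax$ is correct and is indeed why the conditioned version follows once the coupling is in place, but it does not rescue the weaker induction.
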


Using this theorem, he was able to show a lower bound of~${\Omega(1/n)}$ for the drift of \onemax  as potential function for any linear function.

\begin{lemma}[\cite{Jagerskupper08}]
\label{lem:drift}
Let~$n\in\mathbb{N}$ be a positive integer and let~$f\colon\{0,1\}^n\to\mathbb{N}$ be a linear function. Let~$x^{(t)}$ be the individual in the~$t$-th iteration of the (1+1)~EA minimizing~$f$. Then
\begin{equation*}
\EXP[\onemax(x^{(t)})-\onemax(x^{(t+1)})\mid \onemax(x^{(t)})=k]\ge\frac{(\euler -2)k}{\euler n}.
\end{equation*}
holds for all~$k\in\{0,\dots,n\}$ and~$t\in\mathbb{N}$.
\end{lemma}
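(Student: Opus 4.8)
The plan is to fix $k$ and rewrite the claimed conditional drift as an average, over all search points $x$ with $\onemax(x)=k$, of the pointwise drifts $\EXP[\onemax(x^{(t)})-\onemax(x^{(t+1)})\mid x^{(t)}=x]$, weighted by the conditional probabilities $\Pr[x^{(t)}=x\mid\onemax(x^{(t)})=k]$. It is essential to pass through this average rather than argue pointwise: the example $x=(0,\dots,0,1)$ with $f=\binval$ (discussed after~(\ref{eq:pointwise})) shows that the pointwise $\onemax$-drift can be as small as $1/n^2$, so a bound of order $k/n$ simply does not hold at every $x$, and Theorem~\ref{thm:jagerskupper} is exactly the tool that lets us recover it on average.

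For a fixed $x$ I would describe one mutation step by the set of flipped bits, split into the one-bits flipped down and the zero-bits flipped up. The step is accepted precisely when the total weight of the up-flipped zero-bits does not exceed the total weight of the down-flipped one-bits, and in that case the $\onemax$-change is (number of down-flips) minus (number of up-flips). I would therefore decompose the drift into a \emph{positive part}, the expected number of one-bits removed in accepted steps, and a \emph{negative part}, the expected number of zero-bits created in accepted steps, and bound the two separately.

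The positive part is routine and already holds pointwise: flipping a single one-bit and nothing else is always accepted (it strictly decreases $f$), and summing over the $k$ one-bits gives a contribution of at least $\frac{k}{n}(1-1/n)^{n-1}\ge\frac{k}{\euler n}$. The crux is the negative part, and here monotonicity of the weights is the key structural fact: whenever a zero-bit is flipped up and the step is still accepted, the one-bits flipped down in the same step must carry at least as much weight, which ties the creation of new one-bits to the presence of one-bits at comparatively high indices. This is precisely where Theorem~\ref{thm:jagerskupper} delivers what a pointwise estimate cannot: averaged over the conditional distribution of $x^{(t)}$, the probability $\Pr[x^{(t)}_i=1\mid\onemax(x^{(t)})=k]$ is nonincreasing in $i$, so high-index positions are on average rarely occupied by one-bits. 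Using this to bound the averaged number of one-bits created per accepted step and subtracting it from the positive part, I expect the net drift to come out as $\frac{(\euler-2)k}{\euler n}$, with the factor $1-2/\euler$ quantifying the loss caused by accepted uphill steps.

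The main obstacle is exactly this negative part. The acceptance test couples all bit flips, so the change of $\onemax$ does not decompose over positions, and the naive pointwise bound fails badly. The delicate step is to translate ``a new one-bit is created and the step is accepted'' into a statement forcing the involvement of sufficiently high-index one-bits (via monotone weights), and then to control the frequency of such events only \emph{on average}, carrying the weighting $\Pr[x^{(t)}=x\mid\onemax(x^{(t)})=k]$ through the computation while keeping the constant as sharp as $(\euler-2)/\euler$. I expect the bookkeeping of this averaging, rather than any single inequality, to be the genuinely technical part of the argument.
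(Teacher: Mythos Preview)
The paper does not prove Lemma~\ref{lem:drift}; it is quoted from J\"agersk\"upper~\cite{Jagerskupper08} without proof and then used as a black box to derive Theorem~\ref{thm:upperbound}. So there is no proof in this paper to compare your proposal against.

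That said, your outline is coherent and identifies the right mechanism. You correctly observe that a pointwise bound in $\onemax$ is hopeless (the paper itself stresses this with the $\binval$ example at $x=(0,\dots,0,1)$), and that Theorem~\ref{thm:jagerskupper} --- the monotone ordering of $\Pr[x_i^{(t)}=0]$ even after conditioning on $\onemax(x^{(t)})=k$ --- is precisely the distributional input that rescues an averaged statement. Decomposing the accepted $\onemax$-change into one-bits lost versus zero-bits gained, lower-bounding the former by the $k$ single-bit flips, and upper-bounding the latter by exploiting that accepted up-flips force the participation of one-bits at indices of at least comparable weight, is the natural route and is, as far as one can tell from the surrounding discussion, in the spirit of J\"agersk\"upper's argument.

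One caution: your informal statement ``the one-bits flipped down in the same step must carry at least as much weight, which ties the creation of new one-bits to the presence of one-bits at comparatively high indices'' is not quite sharp. Several low-index one-bits can together outweigh a single high-index up-flip, so acceptance of an up-flip at position $j$ does not force any individual one-bit at index $\ge j$ to be present. Getting from Theorem~\ref{thm:jagerskupper} to the exact constant $(\euler-2)/\euler$ therefore requires a more careful accounting than your sketch suggests; this is indeed where the work lies, and you are right to flag it as the technical core. If you want to reconstruct the constant, you will need to go to~\cite{Jagerskupper08} for the details.
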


In addition to a more natural proof of the $O(n \ln(n))$ bound for expected optimization time of the (1+1)~EA minimizing a linear function,  J{\"a}gersk{\"u}pper was able to give a meaningful upper bound on the leading constant (cf. Theorem 2 in \cite{Jagerskupper08}).

\begin{theorem}[\cite{Jagerskupper08}]
\label{thm:upperjagerskupper}
For all positive integers~$n\in\mathbb{N}$, the expected optimization time of the (1+1)~EA minimizing a linear function on~$n$ bits is at most of order $(1+o(1)) 2.02 \euler n\ln(n)$.
\end{theorem}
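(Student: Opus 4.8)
The goal is to improve the leading constant from $2.02$ to the claimed bound via multiplicative drift. The strategy is to combine J\"agersk\"upper's drift estimate (Lemma~\ref{lem:drift}) with the multiplicative drift theorem (Theorem~\ref{thm:multidrift}), rather than with the weaker method of expected weight decrease that J\"agersk\"upper originally used. Concretely, I would set $X^{(t)} := \onemax(x^{(t)})$ and use $S = \{1,\dots,n\}$, so that $s_{\min}=1$. Lemma~\ref{lem:drift} states precisely that
\begin{equation*}
\EXP[X^{(t)}-X^{(t+1)}\mid X^{(t)}=k]\ge\frac{(\euler-2)k}{\euler n},
\end{equation*}
which is exactly the multiplicative drift condition~(\ref{eq:multidrift}) with $\delta=(\euler-2)/(\euler n)$.

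**Applying the drift theorem.**
Plugging this $\delta$ into Theorem~\ref{thm:multidrift} and using $s_{\min}=1$, $s_0\le n$ gives, for any starting point,
\begin{equation*}
\EXP[T\mid X^{(0)}=s_0]\le\frac{1+\ln(s_0/s_{\min})}{\delta}\le\frac{1+\ln n}{(\euler-2)/(\euler n)}=\frac{\euler}{\euler-2}\,n(1+\ln n).
\end{equation*}
Since $\euler/(\euler-2)\approx 2.718/0.718\approx 3.78$, this already beats the naive approaches but is worse than J\"agersk\"upper's $2.02$, so the crude application of the drift theorem is \emph{not} enough. The key point is that the constant $\euler/(\euler-2)$ is wasteful because Lemma~\ref{lem:drift} is pessimistic for large $k$: when $k$ is a constant fraction of $n$ the drift is actually much larger than $(\euler-2)k/(\euler n)$. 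The plan therefore is to split the optimization into two phases and apply multiplicative drift only in the regime where J\"agersk\"upper's bound is tight, while bounding the early phase (large $k$) separately with a sharper, regime-specific drift estimate.

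**The two-phase argument and the main obstacle.**
I would fix a threshold $k_0=\gamma n$ for a suitable constant $\gamma\in(0,1)$ to be optimized. In the first phase, while $\onemax(x^{(t)})>k_0$, I would establish a stronger lower bound on the drift (roughly of the form $c\cdot k/n$ with $c$ close to $1$ rather than $\euler-2\approx 0.718$), reflecting that when many one-bits are present the expected number flipped to zero dominates. Bounding this phase again by multiplicative drift (or even additive drift on the interval $[k_0,n]$) contributes a term of the form $\frac{1}{c}n\ln(n/k_0)=\frac{1}{c}n\ln(1/\gamma)$. In the second phase, $\onemax\le k_0$, I would apply the multiplicative drift theorem with the $(\euler-2)/(\euler n)$ rate from Lemma~\ref{lem:drift}, restricted to $s_0\le k_0$, giving a contribution $\frac{\euler}{\euler-2}n(1+\ln(k_0))=\frac{\euler}{\euler-2}n(\ln n+\ln\gamma+1)$. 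Summing the two phase contributions and extracting the coefficient of $n\ln n$ yields the leading constant; optimizing over $\gamma$ (and the phase-one rate $c$) is what produces the value $2.02\,\euler=(2.02/\euler)\cdot\euler\ln n\cdot\euler\dots$, i.e.\ the numerical constant $2.02$. The hard part will be the first phase: establishing a drift lower bound for large $k$ that is genuinely better than Lemma~\ref{lem:drift}, since this requires controlling not just single one-to-zero flips but the net effect of multi-bit mutations (including bits flipping back to one), and it is here that Theorem~\ref{thm:jagerskupper}'s structural statement about the distribution of $x^{(t)}$ must be invoked to argue that the valuable bits are already correctly set, so that accepted steps make reliable progress. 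Balancing the two phases so that the combined coefficient equals $2.02$ (rather than the naive $3.78$) is the delicate optimization at the heart of the proof.
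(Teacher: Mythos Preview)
Your first two paragraphs already prove more than the stated theorem, and in fact exactly what the paper proves: the paper does \emph{not} give its own proof of Theorem~\ref{thm:upperjagerskupper} at all, since that result is simply cited from~\cite{Jagerskupper08}. What the paper does prove is the sharper Theorem~\ref{thm:upperbound} with leading constant $\tfrac{\euler}{\euler-2}\approx 1.39\,\euler$, and the argument is precisely your direct application of Theorem~\ref{thm:multidrift} to Lemma~\ref{lem:drift} with $X^{(t)}=\onemax(x^{(t)})$, $s_{\min}=1$, $s_0\le n$, and $\delta=(\euler-2)/(\euler n)$.

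Your error is in the comparison: you conclude that $\euler/(\euler-2)\approx 3.78$ is ``worse than J\"agersk\"upper's $2.02$'', but the bound in Theorem~\ref{thm:upperjagerskupper} is $2.02\,\euler\approx 5.49$, not $2.02$. So $3.78$ is \emph{better}, not worse; indeed $\euler/(\euler-2)\approx 1.39\,\euler$. Everything from ``so the crude application of the drift theorem is not enough'' onward is based on this miscomparison and should be deleted. The two-phase refinement you sketch is unnecessary (and somewhat garbled: you speak of recovering the constant $2.02$ as if it were the target of an optimization, when in fact it is an artifact of J\"agersk\"upper's original argument that multiplicative drift simply supersedes).
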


Using multiplicative drift analysis (Theorem~\ref{thm:multidrift})  on the result of Lemma~\ref{lem:drift} and thus replacing the halving argument employed by J{\"a}gersk{\"u}pper for the proof of Theorem~\ref{thm:upperjagerskupper}, the constant of $2.02 \euler$ in the upper bound of the previous theorem instantly improves to $1.39 \euler$. In the light of our lower bound of $1.00\euler$, to be proven in the next subsection, this is a considerable progress. 

\begin{theorem}
\label{thm:upperbound}
For all positive integers~$n\in\mathbb{N}$, the expected optimization time of the (1+1)~EA minimizing a linear function on~$n$ bits is at most of order $(1+o(1))\frac{\euler}{\euler -2}n\ln(n)\approx (1+o(1)) 1.39 \euler n\ln(n)$.
\end{theorem}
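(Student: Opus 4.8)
The plan is to apply the multiplicative drift theorem (Theorem~\ref{thm:multidrift}) directly to the potential function $\onemax$, feeding in the drift estimate of Lemma~\ref{lem:drift}. All the hard analytic work has already been packaged into that lemma (J\"agersk\"upper's sharp $\Omega(k/n)$ bound on the drift of $\onemax$); what remains is simply to convert the drift bound into a runtime bound via multiplicative drift, which is precisely the improvement over the halving argument (method of expected weight decrease) used in~\cite{Jagerskupper08}.

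Concretely, I would set $X^{(t)}:=\onemax(x^{(t)})$. Since the weights are monotone and positive, the unique optimum is the all-zero string, which is exactly the point where $\onemax$ vanishes; hence $X^{(t)}$ ranges over $S\cup\{0\}$ with $S=\{1,2,\dots,n\}$ and minimum $s_{\min}=1$, and the first time $X^{(t)}=0$ is the optimization time $T$. Lemma~\ref{lem:drift} states that for every $k\in\{1,\dots,n\}$ with $\Pr[\onemax(x^{(t)})=k]>0$,
\begin{equation*}
\EXP[X^{(t)}-X^{(t+1)}\mid X^{(t)}=k]\ge\frac{(\euler-2)k}{\euler n},
\end{equation*}
so the multiplicative drift condition~(\ref{eq:multidrift}) holds with $\delta=(\euler-2)/(\euler n)$.

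Theorem~\ref{thm:multidrift} then yields, for any starting point with $\onemax(x^{(0)})=s_0$,
\begin{equation*}
\EXP[T\mid X^{(0)}=s_0]\le\frac{1+\ln(s_0/s_{\min})}{\delta}=\frac{\euler n}{\euler-2}\bigl(1+\ln s_0\bigr)\le\frac{\euler}{\euler-2}\,n\,(1+\ln n),
\end{equation*}
where the final step uses $s_0\le n$. Since $1+\ln n=(1+\o(1))\ln n$, this is $(1+\o(1))\tfrac{\euler}{\euler-2}n\ln n$, and a numerical check gives $\tfrac{\euler}{\euler-2}\approx 1.39\,\euler$, the claimed bound. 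Because the estimate is uniform in $s_0$, no analysis of the initial distribution is needed.

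I do not expect a genuine obstacle in this theorem itself: the only care required is bookkeeping, namely confirming that $s_{\min}=1$ (so that $\ln(s_0/s_{\min})=\ln s_0$) and that the conditioning on $X^{(t)}=s$ demanded by Theorem~\ref{thm:multidrift} coincides with the conditioning on $\onemax(x^{(t)})=k$ in Lemma~\ref{lem:drift}; both are immediate since $\onemax$ is integer-valued and $X^{(t)}=\onemax(x^{(t)})$. The substantive difficulty, establishing the $\Omega(k/n)$ drift of $\onemax$ against an arbitrary linear function, lives entirely in Lemma~\ref{lem:drift} and is assumed here; the whole improvement from $2.02\,\euler$ to $1.39\,\euler$ comes from replacing the method of expected weight decrease by the sharper multiplicative drift theorem.
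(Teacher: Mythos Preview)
Your proposal is correct and matches the paper's own argument exactly: apply Theorem~\ref{thm:multidrift} with potential $X^{(t)}=\onemax(x^{(t)})$, $s_{\min}=1$, and $\delta=(\euler-2)/(\euler n)$ supplied by Lemma~\ref{lem:drift}, which yields $\EXP[T]\le\tfrac{\euler}{\euler-2}\,n(1+\ln n)=(1+\o(1))\tfrac{\euler}{\euler-2}\,n\ln n$. The paper does not spell out more than this, so your bookkeeping (that $s_{\min}=1$ and that the conditioning in Lemma~\ref{lem:drift} is exactly the conditioning required in~(\ref{eq:multidrift})) is already more detailed than the original.
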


\subsection{The (1+1)~EA Optimizes OneMax Faster than any Function with a Unique Global Optimum}

In this section, we show that the expected optimization time of (1+1)~EA on any pseudo-Boolean function with a unique global optimum is at least as large as its expected optimization time on the basic function \onemax. In particular, this is true for every linear function with non-zero coefficients.

In other words, if a function is easier to optimize than \onemax, then this can only be due to the fact that it has more than one global optimum.
The general lower bound then follows from the following theorem by Doerr, Fouz and Witt \cite{DoerrFW10}, which provides a lower bound for \onemax.

\begin{theorem}[\cite{DoerrFW10}]
\label{thm:LBOnemax}
For all positive integers~$n\in\mathbb{N}$, the expected optimization time of the (1+1)~EA minimizing \onemax on~$n$ bits is at least~$(1-\o(1)) \euler n\ln(n)$.
\end{theorem}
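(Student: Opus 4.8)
The plan is to isolate the ``endgame'' of the optimization and to apply the lower-bound companion of the additive drift theorem (the statement that $\EXP[X^{(t)}-X^{(t+1)}\mid\mathcal{F}_t,\,T>t]\le\delta$ for all $t<T$ implies $\EXP[T\mid X^{(0)}]\ge X^{(0)}/\delta$; this is the dual of Theorem~\ref{thm:additivedrift}) to a potential tailored to \onemax. Write $V^{(t)}:=\onemax(x^{(t)})$; since the (1+1)~EA rejects every step that increases the number of one-bits, $V^{(t)}$ is non-increasing. First I would fix the threshold $m:=\lfloor n/\ln^2 n\rfloor$ and observe that $V^{(0)}\sim\mathrm{Bin}(n,1/2)$ exceeds $m$ with probability $1-\o(1)$, so with high probability the process enters the regime $V^{(t)}\le m$. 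I will lower-bound the length of this final phase, which is itself a lower bound on $T$.

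The crucial quantity is the one-step behaviour from a state with $k\le m$ one-bits. Writing $a\sim\mathrm{Bin}(k,1/n)$ and $b\sim\mathrm{Bin}(n-k,1/n)$ for the numbers of flipped one- and zero-bits, a step is accepted precisely when $a\ge b$, in which case $V$ decreases by $a-b$. The decisive point, and the source of the constant $\euler$, is that any decrease is governed by acceptance: $\Pr[V\text{ decreases}]=\Pr[a-b\ge 1]$, and since $(a-b)^{+}\ge\mathbf{1}[a-b\ge1]$ this never exceeds the drift $h(k):=\EXP[(a-b)^{+}]$. A calculation collecting the multi-bit events gives the two-sided estimate $h(k)=(1+\o(1))\,k/(\euler n)$ for $k\le m$, where the factor $1/\euler\approx(1-1/n)^{n-1}$ comes from requiring that no other bit flips. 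Bounding the decrease probability by the crude $\Pr[a\ge1]\approx k/n$ instead would lose exactly this factor $\euler$, which is why acceptance must be tracked carefully.

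I would then define the potential $\phi(k):=\sum_{j=1}^{k}1/h(j)$ with $\phi(0)=0$, so that $\phi(m)=(1+\o(1))\,\euler n\sum_{j=1}^{m}1/j=(1-\o(1))\,\euler n\ln n$, using $\ln m=(1-\o(1))\ln n$. It remains to verify $\EXP[\phi(V^{(t)})-\phi(V^{(t+1)})\mid V^{(t)}=k]\le 1+\o(1)$ for all $k\le m$. Summing the contribution of $\phi$ level by level gives $\sum_{\ell\ge1}\Pr[a-b\ge\ell]/h(k-\ell+1)$; the term $\ell=1$ equals $\Pr[a-b\ge1]/h(k)\le1$ by the observation above, and the terms $\ell\ge2$ are dominated by $\Pr[a\ge\ell]\le(k/n)^{\ell}/\ell!$ together with $1/h(k-\ell+1)\le 2\euler n/(k-\ell+1)$, which sum to $O(k/n)=\o(1)$ on $k\le m$. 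This is exactly where controlling the rare large downward jumps is essential, and I expect it to be the main technical obstacle, alongside establishing the sharp two-sided estimate for $h(k)$.

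Finally I would assemble the pieces. A single step dropping from above $m$ to below, say, $m/2$ would require flipping $\Omega(m)$ one-bits and hence has astronomically small probability, so with probability $1-\o(1)$ the phase is entered at a value $V\ge m/2$, whence $\phi$ at entry is still $(1-\o(1))\,\euler n\ln n$. Applying the additive drift lower bound to the endgame sub-process (restarting the clock at entry, where the drift condition holds throughout since $V^{(t)}\le m$) yields an expected phase length of at least $(1-\o(1))\,\euler n\ln n/(1+\o(1))=(1-\o(1))\,\euler n\ln n$, and since this phase is contained in $[0,T]$ the bound $\EXP[T]\ge(1-\o(1))\,\euler n\ln n$ follows.
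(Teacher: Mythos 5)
You should first note that the paper itself contains no proof of Theorem~\ref{thm:LBOnemax}: it is quoted from \cite{DoerrFW10} and used as a black box (the paper only proves Theorem~\ref{thm:lowerbound}, which transfers the bound from \onemax to all functions with a unique optimum). So the comparison can only be with that external source, and your proposal is of the same genre as the argument there: restrict to the endgame $\onemax(x^{(t)})\le m$ with $m=\lfloor n/\ln^2 n\rfloor$, show that the accepted progress from $k\le m$ one-bits has expectation $h(k)=(1+\o(1))k/(\euler n)$ --- the factor $1/\euler$ coming from the requirement that no other bit flips, which is exactly what a naive coupon-collector argument loses --- and convert this into a $(1-\o(1))\,\euler n\ln(n)$ bound. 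Your specific packaging, the potential $\phi(k)=\sum_{j=1}^{k}1/h(j)$ combined with the lower-bound companion of Theorem~\ref{thm:additivedrift}, is a clean variable-drift formulation, and the three load-bearing points you isolate are the right ones: the two-sided estimate for $h(k)$ on $k\le m$, the inequality $\Pr[a-b\ge 1]\le h(k)$ that caps the level-$1$ term of the drift of $\phi$ at $1$, and the monotonicity of $V^{(t)}$, which guarantees the drift hypothesis holds for the entire restarted process once the phase is entered.

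Two steps are stated more glibly than they deserve, though both survive scrutiny. First, in $\sum_{\ell\ge 2}\Pr[a-b\ge\ell]/h(k-\ell+1)$ the factor $1/h(k-\ell+1)\le 2\euler n/(k-\ell+1)$ is not uniformly $O(n/k)$: it grows to order $n$ as $\ell\to k$, so the sum is not dominated by a single geometric series and your claimed $O(k/n)$ needs a split. For $\ell\le k/2$ one has $k-\ell+1\ge k/2$, giving a total of at most $\frac{4\euler n}{k}\sum_{\ell\ge 2}\frac{(k/n)^{\ell}}{\ell!}=O(k/n)$; for $\ell>k/2$ one bounds $1/(k-\ell+1)\le 1$ and uses $(k/n)^{\ell}/\ell!\le\big(\euler k/(n\ell)\big)^{\ell}\le(2\euler/n)^{\ell}$, so the tail is $O\big(n\,(2\euler/n)^{\max(2,\,k/2)}\big)=O(1/n)$; the net $O(k/n+1/n)=\o(1)$ holds uniformly on $1\le k\le m$, as you asserted. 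Second, the ``astronomically small'' probability of jumping past the window $[m/2,m]$ must be summed over the random number of steps before entry; the clean fix is to note that either $T\le n^3$, in which case a union bound over $n^3$ steps keeps the failure probability superpolynomially small (flipping $m/2$ one-bits in one step has probability at most $\binom{n}{m/2}n^{-m/2}\le(2\euler/m)^{m/2}$), or $T>n^3\gg \euler n\ln(n)$ and the lower bound is trivial. With these two repairs your argument is complete and yields the theorem by the same mechanism (the $1/\euler$ acceptance factor plus a harmonic sum over the levels $k\le m$) that drives the proof in \cite{DoerrFW10}.
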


Thus, it remains to show that \onemax is optimized fastest. The result itself was announced by Scheder and Welzl \cite{SchederW08}. Their idea to prove this statement, however, differs from the one given below.

\begin{theorem}
\label{thm:lowerbound}
Let~$n\in\mathbb{N}$ be a positive integer. The expected optimization time of the (1+1)~EA on any function~$f\colon\{0,1\}^n\to\mathbb{R}$ that has a unique global optimum is as least as large as its expected optimization time on \onemax.
\end{theorem}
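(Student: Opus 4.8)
The plan is to prove that \onemax stochastically minimizes the optimization time via a coupling argument. First I would reduce to the case that the unique global optimum of $f$ is the all-zeros string $0^n$. The (1+1)~EA is invariant under the automorphism $x\mapsto x\oplus z$ of $\{0,1\}^n$ (bitwise XOR with a fixed string $z$): the mutation operator commutes with this map, the uniform initialization is invariant under it, and selection depends only on $f$-comparisons. Hence replacing $f$ by $x\mapsto f(x\oplus z)$, with $z$ the unique optimum, does not change the distribution of the optimization time, so I may assume the optimum is $0^n$. Since \onemax is also minimized at $0^n$ and equals the Hamming distance $|x|_1$ to $0^n$, comparing the two processes amounts to comparing Hamming distances to $0^n$.

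Let $a^{(t)}$ be the search point of the (1+1)~EA on $f$, and let $b^{(t)}$ track the number of one-bits of the \onemax process. Because the \onemax process is symmetric among all points with a fixed number of ones, its number-of-ones sequence is a Markov chain; write $M(\ell)$ for the distribution of the number of ones after one \onemax step from a point with $\ell$ ones. The goal is to build both processes on one probability space so that $|a^{(t)}|_1\ge b^{(t)}$ for all $t$, started from a common uniform $a^{(0)}$ with $b^{(0)}=|a^{(0)}|_1$. Once this invariant is maintained, $|a^{(T_f)}|_1=0$ forces $b^{(T_f)}=0$, so $T_{\onemax}\le T_f$ almost surely and therefore $\EXP[T_{\onemax}]\le\EXP[T_f]$.

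The heart of the argument is a one-step domination: if $|a^{(t)}|_1=k\ge\ell=b^{(t)}$, then the distribution of $|a^{(t+1)}|_1$ stochastically dominates $M(\ell)$, after which a standard monotone coupling realizes $|a^{(t+1)}|_1\ge b^{(t+1)}$. I would prove this in two parts. (i) \emph{Greedy step.} Running the $f$-step and a \onemax step from the \emph{same} point $a$ with the \emph{same} mutation outcome $y$, the \onemax step yields $\min(|y|_1,k)$ ones, whereas the $f$-step yields either $|y|_1$ or $k$ ones, both of which are at least $\min(|y|_1,k)$. Hence $|a^{(t+1)}|_1$ stochastically dominates $M(k)$. (ii) \emph{Monotonicity of the \onemax chain:} $M(k)$ stochastically dominates $M(\ell)$ whenever $k\ge\ell$. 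Chaining (i) and (ii) gives the claim.

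For (ii), the naive ``same bits'' coupling fails, since flipping the extra one-bit downward lets the configuration with more ones overtake the one with fewer; so I would instead argue through the distribution of the post-mutation one-count. The number of ones after mutating a point with $\ell$ ones is the independent sum of a $\mathrm{Bin}(\ell,1-1/n)$ term (surviving ones) and a $\mathrm{Bin}(n-\ell,1/n)$ term (flipped zeros), and a \onemax step caps this count at $\ell$. Passing from $\ell$ to $\ell+1$ moves exactly one Bernoulli trial from success probability $1/n$ to success probability $1-1/n\ge 1/n$; coupling all remaining trials identically and this single trial monotonically shows the post-mutation count is stochastically increasing in $\ell$. Combined with the cap at $\ell$ (for which $\Pr[\mathrm{count}\ge j]$ equals $\Pr[\mathrm{post\text{-}mutation\ count}\ge j]$ once $\ell\ge j$), this yields stochastic monotonicity of $M$. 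I expect the main obstacle to be precisely this monotonicity: the number-of-ones process is genuinely not comparable under the obvious bit-wise coupling, so one must combine the Bernoulli-sum representation with the cap, and then carefully assemble the per-step conditional couplings of the non-Markovian $f$-process into one global coupling using the standard stochastic-domination machinery.
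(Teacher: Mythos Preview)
Your proposal is correct and takes a genuinely different route from the paper. The paper argues by induction on the expected hitting times: it defines $\mu_k$ (expected \onemax time from $k$ ones) and $\widetilde{\mu}_k$ (minimum expected $f$-time from at least $k$ ones), proves a pointwise PMF inequality (Lemma~\ref{lem:probabilities}: $\Pr[|y|_1=j]\ge\Pr[|\tilde y|_1=j]$ for $j<|x|_1\le|\tilde x|_1$) for the post-mutation one-count, and then compares the two one-step recurrences term by term, using that the $\widetilde\mu_j$ are monotone. You instead build a pathwise coupling that keeps $|a^{(t)}|_1\ge b^{(t)}$ at all times, via two stochastic-domination facts: (i) from the same point and mutation, the $f$-selection never produces fewer ones than the \onemax selection, so the $f$-one-count dominates $M(k)$; and (ii) $M(k)\succeq M(\ell)$ for $k\ge\ell$, proved through the Bernoulli-sum representation of the post-mutation count (this needs $1-1/n\ge 1/n$, i.e., $n\ge 2$, with $n=1$ trivial) together with the cap at the current level. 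Your step~(ii) is the stochastic-order analogue of the paper's Lemma~\ref{lem:probabilities}; your step~(i) replaces the paper's inequality $\Pr[|\tilde z|_1=j]\le\Pr[|\tilde y|_1=j]$.

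What each approach buys: your coupling yields the stronger conclusion $T_{\onemax}\le T_f$ almost surely on the coupled space (hence stochastic domination of the optimization times, not merely of their expectations), and it is conceptually cleaner once one accepts the standard machinery of assembling one-step monotone couplings into a global one. The paper's induction is more elementary in that it never invokes coupling or Strassen-type arguments; it stays entirely within first-moment recursions, at the price of a slightly more delicate summation step and a weaker (expectation-only) conclusion. One small terminological point: the $f$-process $(a^{(t)})$ is Markov on $\{0,1\}^n$; what fails to be Markov is its one-count $|a^{(t)}|_1$, which is why your per-step coupling must be indexed by the full state $a^{(t)}$ rather than by $|a^{(t)}|_1$---your plan already does this correctly.
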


The theorem can be formalized as follows: Let $f$ be a function with a unique global optimum. Let $\{x^{(t)}\}_{t \in \mathbb{N}}$ be the search points generated by the (1+1)~EA minimizing~$f$. Let $T_f:=\min\{t \in \mathbb{N}\mid f(x^{(t)}) =0\}$ be the optimization time of the (1+1)~EA on $f$. Then $\EXP[T_f] \geq \EXP[T_{\onemax}].$

Theorems~\ref{thm:LBOnemax} and~\ref{thm:lowerbound} immediately yield the following.

\begin{corollary}
\label{cor:LBLinear}
For all positive integers~$n\in\mathbb{N}$, the expected optimization time of the (1+1)~EA minimizing a function with a unique global optimum on~$n$ bits is at least $(1-\o(1))  \euler n\ln(n)$.
\end{corollary}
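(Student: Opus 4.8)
The plan is to prove the pathwise statement that, under a suitable coupling, the (1+1)~EA minimizing $f$ never has fewer one-bits than the (1+1)~EA minimizing \onemax, and then read off the inequality on expected optimization times. First I would reduce to the case that the unique optimum of $f$ is the all-zeros string $0^n$. Since the mutation operator flips every bit independently with the same probability $1/n$, it commutes with any coordinate-wise XOR $\sigma_z\colon x\mapsto x\oplus z$, and the selection step only compares $f$-values; hence conjugating $f$ by $\sigma_{z^*}$ (where $z^*$ is the optimum) and the initial point by the same map yields a run with the same distribution of the optimization time but optimum at $0^n$. As \onemax attains its minimum uniquely at $0^n$ as well, both $T_f$ and $T_{\onemax}$ become the first time at which the number of one-bits is zero. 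It therefore suffices to construct a coupling of the $f$-run $\{A^{(t)}\}$ and the \onemax-run $\{B^{(t)}\}$, both started from the same uniformly random string, such that $|A^{(t)}|_1\ge|B^{(t)}|_1$ for all $t$ almost surely; this immediately gives $T_{\onemax}\le T_f$ a.s.\ and hence $\EXP[T_f]\ge\EXP[T_{\onemax}]$.

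The number of one-bits of the \onemax-run is a time-homogeneous Markov chain $P$ on $\{0,\dots,n\}$: from a state with $k$ ones a step produces $\min(m,k)$ ones, where $m$ is the number of ones in the mutated offspring. I would build the coupling inductively, maintaining the invariant $|A^{(t)}|_1\ge|B^{(t)}|_1$, and the induction step rests on two observations. \emph{Same-string domination:} for any string $a$, run the $f$-step and a \onemax-step from $a$ using the \emph{same} mutation $\tilde a$; the \onemax-step yields $\min(|\tilde a|_1,|a|_1)$ ones, whereas the $f$-step yields $|\tilde a|_1$ or $|a|_1$ ones according to whether $\tilde a$ is accepted, and a two-case distinction (treating $|\tilde a|_1\ge|a|_1$ and $|\tilde a|_1<|a|_1$ separately) shows the $f$-outcome is pointwise at least $\min(|\tilde a|_1,|a|_1)$, so the next $f$-count stochastically dominates $P(|a|_1,\cdot)$. \emph{Monotonicity of $P$:} for $m\le k$ one has $P(m,\cdot)\preceq P(k,\cdot)$. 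Composing these two facts gives, for every string $a$ and every $m\le|a|_1$, a one-step coupling in which the new $f$-count is at least the new \onemax-count; iterating this one-step coupling builds the desired trajectory coupling.

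I expect the monotonicity statement to be the delicate point. The obvious idea of coupling the two \onemax-runs by the same mutation fails, because the single bit that is a one in the larger configuration but a zero in the smaller one can flip in the ``wrong'' direction and make the smaller configuration overtake the larger by two. The clean route I would take is to write the next count from $k$ ones as $k-(L-G)^+$ with $L\sim\mathrm{Bin}(k,1/n)$ and $G\sim\mathrm{Bin}(n-k,1/n)$ independent, and to prove that $\Pr[K_{(k)}\ge\ell]=\Pr[L-G\le k-\ell]$ is nondecreasing in $k$. Using a shared pool of $n$ independent flip-indicators for the configurations with $k$ and $k+1$ ones, the signed difference $D=L-G$ satisfies $D_k=D^--\phi_e$ and $D_{k+1}=D^-+\phi_e$, where $\phi_e$ is the indicator of the distinguished bit and $D^-$ is independent of $\phi_e$; a short computation then yields $\Pr[D_{k+1}\le c+1]-\Pr[D_k\le c]=(1-2/n)\,\Pr[D^-=c+1]\ge 0$, where nonnegativity uses exactly that the mutation probability $1/n\le 1/2$. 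This is the crux of the argument; once it is in place, the composition of couplings and the induction are routine, and uniqueness of the optimum is used only to identify $T_f$ with the first time the one-count vanishes.
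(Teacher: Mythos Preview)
Your argument is correct. Like the paper, you reduce the corollary to two ingredients: the comparison $\EXP[T_f]\ge\EXP[T_{\onemax}]$ (the paper's Theorem~\ref{thm:lowerbound}) and the known lower bound $(1-o(1))\euler n\ln n$ for \onemax (Theorem~\ref{thm:LBOnemax}). You prove the former but never state the latter; make sure you invoke it explicitly to finish.

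Where you genuinely diverge from the paper is in the proof of the comparison. The paper argues \emph{in expectation}: it sets $\mu_k=\EXP[T_{\onemax}\mid |x^{(0)}|_1=k]$ and $\widetilde{\mu}_k=\min_{|x|_1\ge k}\EXP[T_f\mid x^{(0)}=x]$, writes the one-step recursion for each, and shows $\mu_k\le\widetilde{\mu}_k$ by induction on~$k$. The technical input is Lemma~\ref{lem:probabilities}, a pointwise PMF inequality $\Pr[|y|_1=j]\ge\Pr[|\tilde y|_1=j]$ for $j<|x|_1\le|\tilde x|_1$, proved by an explicit binomial-coefficient ratio. You instead build a \emph{pathwise coupling}: same-string domination shows the $f$-count after one step stochastically dominates $P(|a|_1,\cdot)$, and your monotonicity lemma for the \onemax count chain (proved via the shared-indicator decomposition $D_k=D^--\phi_e$, $D_{k+1}=D^-+\phi_e$) lets you chain this down to $P(m,\cdot)$ for any $m\le|a|_1$; iterating gives $|A^{(t)}|_1\ge|B^{(t)}|_1$ almost surely and hence $T_{\onemax}\le T_f$ almost surely.

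The two key lemmas are closely related---summing Lemma~\ref{lem:probabilities} over $j<\ell$ yields exactly your stochastic monotonicity of~$P$---but the proofs are different in flavor, and your computation $(1-2/n)\Pr[D^-=c+1]\ge0$ is cleaner and makes transparent that the mutation rate must be at most~$1/2$. Your route buys a strictly stronger conclusion (almost-sure domination of hitting times, hence domination in distribution, not merely in expectation), at the cost of having to set up the Markovian coupling on the product space carefully; the paper's inductive bookkeeping on expected times is more pedestrian but entirely self-contained once the PMF inequality is in hand.
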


For the proof of Theorem~\ref{thm:lowerbound} we first show a preliminary lemma. It formalizes the following intuition. Let $x$ and $\tilde{x}$ be two search points such that~$|x|_1\le|\tilde{x}|_1$. Then the probability that the (1+1)~EA samples a new search point with exactly $j<|x|_1$ one-bits from~$x$ is at least as big as from~$\tilde{x}$.

\begin{lemma}
\label{lem:probabilities}
Let~$n\in\mathbb{N}$ with $n \ge 1$. Let $x,\tilde{x} \in \{0,1\}^n$ with $|x|_1 \le |\tilde{x}|_1$. Let $y$ and $\tilde{y}$ two random points in $\{0,1\}^n$ obtained from $x$ and~$\tilde{x}$ by independently flipping with probability~$1/n$ each bit of~$x$ and~$\tilde{x}$, respectively.

Then for every $j\in\{0,\dots,|x|_1-1\}$, 
\begin{equation*}
	\Pr\big[ |y|_1 = j\big] \ge \Pr\big[ |\tilde{y}|_1 = j\big] .
\end{equation*}
\end{lemma}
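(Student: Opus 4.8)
The plan is to reduce the lemma to a single‑step comparison and then settle that step by a coupling together with an elementary term‑by‑term estimate. First I would note that, because each bit flips independently with the same probability $1/n$, the distribution of $|y|_1$ depends on $x$ only through $a:=|x|_1$; write $p_a(j):=\Pr[\,|y|_1=j\,]$ for a start with $a$ ones. The assertion then reads $p_a(j)\ge p_{a'}(j)$ for $a\le a'$ and $j\le a-1$, and by transitivity it suffices to prove the incremental version $p_c(j)\ge p_{c+1}(j)$ for every $c$ and every $j\le c-1$. Chaining this from $c=a$ up to $c=a'-1$ is legitimate because the hypothesis $j\le a-1\le c-1$ persists at every step, so the single‑step bound is always applicable.

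To establish the incremental statement I would couple the two mutations. Take $x$ with $c$ ones and let $x'$ be obtained from $x$ by turning a single $0$‑bit, say at position $k$, into a $1$, and mutate both strings using the same coin flips $B_1,\dots,B_n$. Writing $M$ for the number of ones among the $n-1$ positions other than $k$ after mutation (a quantity common to both strings and independent of $B_k$), one gets $|y|_1=M+B_k$ and $|y'|_1=M+(1-B_k)$. A one‑line computation then yields
\[
\Pr[\,|y|_1=j\,]-\Pr[\,|y'|_1=j\,]=\Big(1-\tfrac{2}{n}\Big)\big(\Pr[M=j]-\Pr[M=j-1]\big),
\]
so for $n\ge2$ (the case $n=1$ is trivial, since then $|x|_1\le|\tilde x|_1$ and a non‑empty index range force $x=\tilde x$) everything reduces to showing that the probability mass function of $M$ is non‑decreasing on $\{0,\dots,c-1\}$, where $M$ counts the ones after mutating $n-1$ bits of which exactly $c$ are ones.

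The heart of the argument, and the step I expect to be the main obstacle, is this monotonicity $\Pr[M=j]\ge\Pr[M=j-1]$ for $j\le c-1$. I would prove it directly, without invoking log‑concavity of Poisson binomial distributions, by writing the two probabilities in terms of the number $s$ of ones flipped to zero and the number $t$ of zeros flipped to one. Then $M=c-s+t$, so $\{M=j\}$ forces $s-t=c-j=:d$, and the restriction $j\le c-1$ is exactly what guarantees $d\ge1$. Summing over $t$, both $\Pr[M=j]$ and $\Pr[M=j-1]$ become sums of terms indexed by the same $t$, and the ratio of the $t$‑th term of $\Pr[M=j-1]$ to that of $\Pr[M=j]$ equals $\frac{c-t-d}{t+d+1}\cdot\frac{1}{n-1}$; since $c-t-d\le c\le n-1$ and $t+d+1\ge1$, this ratio is at most $1$ for every $t$, giving the term‑by‑term domination $\Pr[M=j-1]\le\Pr[M=j]$. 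The delicate point is precisely that $d\ge1$, equivalently $j<|x|_1$, is what makes the comparison go through — the estimate would fail once $j\ge|x|_1$ — which is exactly the range of $j$ assumed in the lemma.
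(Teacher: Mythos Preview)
Your argument is correct. The reduction to a single increment $c\to c+1$ is the same as in the paper, but from there the two proofs diverge. The paper writes out the full convolution formulas for $\Pr[|y|_1=j]$ and $\Pr[|\tilde y|_1=j]$ (with $|x|_1=k$ and $|\tilde x|_1=k+1$) and verifies that the quotient of the $i$-th summands,
\[
\frac{(k+1-j+i)(n-k)(n-1)}{(k+1)(n-k-i)},
\]
is at least~$1$ by minimising over $i$ and $j$. Your route instead couples the two mutations through the common variable $M$ and reduces the whole comparison to the single monotonicity statement $\Pr[M=j]\ge\Pr[M=j-1]$ for $j\le c-1$, which you then settle by an analogous (and slightly simpler) term-by-term ratio $\frac{c-t-d}{t+d+1}\cdot\frac{1}{n-1}\le 1$. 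The coupling buys a cleaner conceptual picture---one sees immediately why the constraint $j<|x|_1$ (i.e.\ $d\ge 1$) is exactly what is needed---at the cost of one extra reduction step; the paper's direct comparison is shorter but the arithmetic is a bit heavier. Both arguments are equally rigorous and yield the same bound.
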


\begin{proof}
Let $k:=|x|_1$. The lemma holds trivially if $|\tilde{x}|_1=k$. 

Suppose that $|\tilde{x}|_1=k+1$. Then
\begin{equation*}
	\Pr\big[ |y|_1 = j\big]=\hspace{-1em}
	\sum_{i=0}^{\min\{j,n-k\}}\hspace{-0.4em}\binom{k}{j-i} \binom{n-k}{i}\Big(\frac{1/n}{1-1/n}\Big)^{k-j+2i}\big(1-1/n\big)^n
\end{equation*}
and
\begin{equation*}
	\Pr\big[ |\tilde{y}|_1 = j\big]=\hspace{-1em}
	\!\sum_{i=0}^{\min\{j,n-k-1\}}\hspace{-0.4em}\binom{k+1}{j-i} \binom{n-k-1}{i}\Big(\frac{1/n}{1-1/n}\Big)^{k+1-j+2i}\big(1-1/n\big)^n.
\end{equation*}
As all summands in the previous two equations are positive, it suffices to see that the quotient
\begin{equation*}
\label{ineq:binom}
\frac{\binom{k}{j-i} \binom{n-k}{i}\big(\frac{1/n}{1-1/n}\big)^{k-j+2i}}{\binom{k+1}{j-i} \binom{n-k-1}{i}   \big(\frac{1/n}{1-1/n}\big)^{k+1-j+2i}}
=\frac{(k+1-j+i)(n-k)(n-1)}{(k+1)(n-k-i)}
\end{equation*}
is minimal for~$i=0$ and~$j=k-1$ and therefore at least~1 for all values $0\leq i \leq \min\{j,n-k-1\}$.

Thus, for $|\tilde{x}|_1=k+1$ the lemma also holds. Finally, for $|\tilde{x}|_1>k+1$, the lemma follows by induction based on the case~$|\tilde{x}|_1=k+1$.
\end{proof}

To prove the main result of this section, Theorem~\ref{thm:lowerbound}, we need some additional notation. Let $f$ be a function with a unique global optimum $x^*$. Without loss of generality, we may assume that~$x^*:=(0,\dots,0)$ is the unique minimum of~$f$. This is justified by the observation that the (1+1)~EA treats the bit-values $0$ and $1$ symmetrically, that is, we might reinterpret one-bits in~$x^*$ as zero-bits without changing the behavior of the algorithm.

Let $\mu(x) := \EXP[T_{\onemax} \mid x^{(0)}=x]$ and $\widetilde{\mu}(x) := \EXP[T_f \mid x^{(0)}=x]$ be the expected optimization times of the (1+1)~EA starting in the point $x$ and minimizing  \onemax and $f$, respectively.

For every $k \in\{0\dots,n\}$ let 
\[
\mu_k := \min \{\mu(x) \mid x\in\{0,1\}^n, |x|_1 = k\}
\]
be the optimization time of the (1+1)~EA optimizing \onemax starting in a point with exactly $k$ one-bits. 

Furthermore, let
\[
\widetilde{\mu}_k := \min \{\widetilde{\mu}(x) \mid x\in\{0,1\}^n, |x|_1 \geq k\}
\]
be the minimum optimization time of the (1+1)~EA minimizing $f$ and starting in a point $x$ with at \emph{least} $k$ one-bits (note the difference to $\mu_k$).

Note that, due to the symmetry of the function \onemax, $\mu_k = \mu(x)$ for every $x\in\{0,1\}^n$ with exactly $k$ one-bits.

\begin{proof}[Proof of Theorem~\ref{thm:lowerbound}]
We inductively show for all $k$ that $\mu_k\le\widetilde{\mu}_k$.
Clearly $\mu_0=0=\widetilde{\mu}_0$.  Therefore, let $k \in \{0,\dots,n-1\}$ and suppose that $\mu_i\le\widetilde{\mu}_i$ for all $i\leq k$.

Let~$x\in\{0,1\}^n$ with~$|x|_1=k+1$ be arbitrary  and let~$y\in\{0,1\}^n$ be a random point generated by flipping each bit in~$x$ independently with probability~$1/n$.

The (1+1)~EA minimizing \onemax and starting in $x$ accepts~$y$ in the selection step (Step~4) if and only if $|y|_1 \leq |x|_1$. Furthermore, we have $\mu(x)=\mu(y)$ if $|y|_1=|x|_1=k+1$. Thus,
\begin{equation*}
	\mu(x) = 1
	+  \mu(x)\Pr\big[|y|_1\ge k+1\big]
	+ \sum_{j=0}^k\EXP\big[\mu(y)\bmid |y|_1=j\big]\,\Pr\big[|y|_1=j\big]
\end{equation*}
and therefore
\begin{equation}
\label{eq:muonemax}
	\mu_{k+1} = 1
	+ \mu_{k+1} \Pr\big[|y|_1\ge k+1\big]
	+ \sum_{j=0}^k \mu_j\Pr\big[|y|_1=j\big].
\end{equation}

Next, let $\tilde{x}\in \{0,1\}^n$ be chosen arbitrarily such that $|\tilde{x}|_1\ge k+1$ and $\widetilde{\mu}_{k+1} = \widetilde{\mu}(\tilde{x})$. Furthermore, let~$\tilde{y}\in\{0,1\}^n$ be a random point generated by flipping each bit in~$\tilde{x}$ independently with probability~$1/n$. Let~$\tilde{z}=\tilde{y}$ if~$f(\tilde{y})\le f(\tilde{x})$ and~$\tilde{z}=\tilde{x}$ otherwise.

Then
\begin{align*}
	\widetilde{\mu}(\tilde{x}) = 1
	&+ \EXP\big[\widetilde{\mu}(\tilde{z})\bmid |\tilde{z}|_1\ge k+1\big]\,\Pr\big[|\tilde{z}|_1\ge k+1\big]\\
        &+ \sum_{j=0}^k\EXP\big[\widetilde{\mu}(\tilde{z})\bmid |\tilde{z}|_1=j\big]\,\Pr\big[|\tilde{z}|_1=j\big] 
\end{align*}
and therefore, by definition of~$\widetilde{\mu}_j$,
\begin{equation}
\label{eq:muf}
	\widetilde{\mu}_{k+1}\ge 1
	+ \widetilde{\mu}_{k+1}\Pr\big[|\tilde{z}|_1\ge k+1\big]
	+ \sum_{j=0}^k \widetilde{\mu}_j\Pr\big[|\tilde{z}|_1=j\big].
\end{equation}

Now, for all~$0\le j\le k$, we have
\begin{equation*}
\Pr\big[|\tilde{z}|_1=j\big]\le\Pr\big[|\tilde{y}|_1=j\big]\le\Pr\big[|y|_1=j\big].
\end{equation*}
The first inequality holds, since the event~$|\tilde{z}|_1=j$ implies the event~$|\tilde{y}|_1=j$. The second inequality follows from Lemma~\ref{lem:probabilities}, since~$|x|_1=k+1 \le |\tilde{x}|_1$.

Considering this relation and the fact that the $\widetilde{\mu}_i$ are monotonically increasing in $i$, we obtain from~(\ref{eq:muf}) that
\begin{equation*}
	\widetilde{\mu}_{k+1}\ge 1
	+ \widetilde{\mu}_{k+1}\Pr\big[|y|_1\ge k+1\big]
	+ \sum_{j=0}^k \widetilde{\mu}_j\Pr\big[|y|_1=j\big].
\end{equation*}

Therefore, the induction hypothesis yields that
\begin{equation*}
	\widetilde{\mu}_{k+1}\ge 1
	+ \widetilde{\mu}_{k+1}\Pr\big[|y|_1\ge k+1\big]
	+ \sum_{j=0}^k\mu_j\Pr\big[|y|_1=j\big] .
\end{equation*}

We subtract both sides of equation~(\ref{eq:muonemax}) from the previous inequality and immediately get $\widetilde{\mu}_{k+1}\ge \mu_{k+1}$ which concludes the induction.

Thus, for all $x \in \{0,1\}^n$, we have
\begin{equation*}
	\mu(x)=\mu_{|x|_1}\le\widetilde{\mu}_{|x|_1}\le\widetilde{\mu}(x).
\end{equation*}
Consequently,~$\EXP[ T_{\onemax}]\le\EXP[T_f]$ holds.
\end{proof}

\ignore{
\subsection{Experiments}

As lower order terms hidden in $(1-o(1))$ factors may lead to a completely different behavior for small instances, we ran a few simple experiments to complement our theoretical results given in the last section. 

\begin{figure}[b]
\begin{center}
\includegraphics[width=0.8\textwidth]{graph.jpg}
\caption{Optimization times of the (1+1)~EA for different linear functions. The second graph depicts a zoom into the results for $700 \leq n \leq 1,000$.}
\label{fig:runtimes}
\end{center}
\end{figure}

For the problems size $n$ varying from $20$ to $1000$ in steps of $20$, we let the (1+1)~EA minimize three different linear functions: \onemax as defined in~(\ref{eq:onemax}), $\binval$ as defined in~(\ref{eq:binval}), and random linear functions. For the latter, for each run independently we chose the coefficients of $f$ uniformly at random from $(0,1]$. For each instance, we repeated the experiment $1000$ times and computed the average optimization time. The results are depicted in Figure~\ref{fig:runtimes}.

As can be seen, in fact, \onemax seems to be the easiest of the three function classes. At the same time, $\binval$ has the largest optimization time. 
The average time difference between the named two functions varies between 4 and 10$\%$, clearly below the 39$\%$ guarantee given by the theoretical analysis. Since we know that the $(1-o(1))e n \ln(n)$ bound for \onemax is sharp, this suggests that the upper bound for general linear functions, in spite of our improvements, is still too pessimistic. 

\clearpage
}
\section{Multiplicative Drift on Combinatorial Problems}
\label{sec:combinatorial}

So far, we have seen that multiplicative drift analysis can be used to simplify the runtime analysis of the (1+1)~EA on linear pseudo-Boolean functions while producing sharper bounds. In this section, we see that optimization processes with multiplicative drift occur quite naturally in combinatorial optimization, too. We demonstrate this claim on two prominent examples, the minimum spanning tree problem and the single source shortest path problem.

\subsection{The Minimum Spanning Tree Problem}
\label{subsec:MST}
In this subsection, we consider the minimum spanning tree (MST) problem analyzed in~\cite{NeumannW07}. Let~$G=(V,E)$ be a connected graph with $n$ vertices, $m$ edges~$e_1,\dots,e_m$, and positive integer edge weights $w_1,\dots,w_m$. In~\cite{NeumannW07}, a spanning tree is represented by a bit string~$x\in\{0,1\}^m$ with~$x_i=1$ marking the presence of the edge~$e_i$ in the tree. 

The fitness value of such a tree is defined by~$w(x)=\sum_{i=1}^m w_i x_i+p(x)$, with~$p(x)$ being a penalty term ensuring that once the (1+1)~EA has found a spanning tree it does no longer accept bit strings that do not represent spanning trees (a new bit-strings is accepted if the fitness value decreases). The minimum weight of a spanning tree is denoted by~$w_{\text{opt}}$ and the maximal edge weight by~$w_{\text{max}}$.

In Lemma~1 of~\cite{NeumannW07}, Neumann and Wegener derive from~\cite{Kano87} the following statement.

\begin{lemma}[\cite{NeumannW07}]
\label{lem:mst}
Let $x\in\{0,1\}^m$ be a search point describing a non-minimum spanning tree. Then there exist a~$k\in\{1,\dots,n-1\}$ and~$k$ different accepted 2-bit flips such that the average weight decrease of these flips is at least $(w(x)-w_{\text{opt}})/k$.
\end{lemma}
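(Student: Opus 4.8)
The plan is to reduce the statement to a weight-monotone exchange between the spanning tree $T$ encoded by $x$ and a fixed minimum spanning tree $T^*$. Since $x$ describes a non-minimum spanning tree, its fitness equals the tree weight $w(x)=w(T)$ (the penalty term vanishes on spanning trees) and $w(T)>w_{\text{opt}}=w(T^*)$, so in particular $T\neq T^*$. Both trees have exactly $n-1$ edges, hence $k:=|T\setminus T^*|=|T^*\setminus T|$ satisfies $1\le k\le n-1$; this already fixes the claimed range of $k$.

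Next I would invoke the exchange result taken from~\cite{Kano87}: there is a bijection $\sigma\colon T\setminus T^*\to T^*\setminus T$ such that, for every $e\in T\setminus T^*$, the edge set $T-e+\sigma(e)$ is again a spanning tree and $w(\sigma(e))\le w(e)$. Each such swap is a $2$-bit flip on $x$: the bit of $e$ changes from $1$ to $0$ and the bit of $\sigma(e)$ from $0$ to $1$. As $\sigma$ is a bijection, the $k$ removed edges are pairwise distinct and so are the $k$ added edges, so these are $k$ pairwise distinct $2$-bit flips. Because $T-e+\sigma(e)$ is a spanning tree, its penalty is zero and its fitness is $w(T)-(w(e)-w(\sigma(e)))\le w(T)=w(x)$; hence every one of these flips is accepted in the selection step of the (1+1)~EA.

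It then remains to account for the weight. Since $T^*$ arises from $T$ by deleting the edges in $T\setminus T^*$ and inserting those in $T^*\setminus T$, we have $w(T^*)=w(T)-\sum_{e\in T\setminus T^*}w(e)+\sum_{f\in T^*\setminus T}w(f)$. Rewriting the last sum through the bijection $\sigma$ yields $w(T)-w(T^*)=\sum_{e\in T\setminus T^*}\big(w(e)-w(\sigma(e))\big)$. The left-hand side is $w(x)-w_{\text{opt}}$, while each summand $w(e)-w(\sigma(e))\ge 0$ is precisely the weight decrease of the flip associated with $e$. Hence the $k$ flips have total weight decrease $w(x)-w_{\text{opt}}$, so their average decrease equals $(w(x)-w_{\text{opt}})/k$, which proves the lemma.

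The hard part will be establishing the weight-monotone bijective exchange $\sigma$, that is, a single bijection that is simultaneously valid from $T$ for every edge and respects $w(\sigma(e))\le w(e)$; a sequential greedy exchange is easier but only transforms $T$ into $T^*$ step by step and does not yield $k$ flips from $x$ itself. If I could not simply cite~\cite{Kano87}, I would build a bipartite graph on the parts $T\setminus T^*$ and $T^*\setminus T$, joining $e$ to $f$ whenever $T-e+f$ is a spanning tree and $w(f)\le w(e)$, and then produce $\sigma$ as a perfect matching via Hall's theorem. Verifying Hall's condition is where the cut-optimality of the minimum spanning tree enters: each $f\in T^*$ is a lightest edge across the cut it defines in $T^*-f$, and any edge of the $T$-cycle closed by $f$ that crosses this cut lies in $T\setminus T^*$ and is at least as heavy as $f$, which supplies a sufficiently heavy partner for every subset. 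This Hall-type argument is the real combinatorial work behind the lemma.
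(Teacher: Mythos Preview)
The paper does not prove this lemma; it quotes it verbatim from Neumann and Wegener~\cite{NeumannW07}, noting only that those authors derive it from~\cite{Kano87}. Your reconstruction is correct and follows precisely the route the citation chain points to: fix a minimum spanning tree $T^*$, set $k=|T\setminus T^*|$, invoke the weight-monotone bijective exchange $\sigma\colon T\setminus T^*\to T^*\setminus T$ (this is the content taken from~\cite{Kano87}), and observe that the $k$ swaps $e\mapsto\sigma(e)$ are distinct accepted $2$-bit flips whose total improvement telescopes to $w(T)-w(T^*)$.

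One small remark on your Hall-based backup plan: the argument you sketch establishes that every single $f\in T^*\setminus T$ has at least one heavy partner in $T\setminus T^*$, but Hall's condition requires $|N(S)|\ge|S|$ for every subset $S$, and that does not follow from the single-vertex case. The clean way to finish is matroidal: Brualdi's symmetric exchange gives a bijection $\sigma$ with $T-e+\sigma(e)$ a tree for every $e$, and then minimality of $T^*$ forces $w(\sigma(e))\le w(e)$ (otherwise the reverse swap $T^*-\sigma(e)+e$ would beat $T^*$). Since you are entitled to cite~\cite{Kano87} here, this is only a side comment.
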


Multiplicative drift analysis now gives us a reasonably small constant in the upper bound of the expected optimization time of the (1+1)~EA on the MST problem.

\begin{theorem}
\label{thm:mst}
The expected optimization time of the (1+1)~EA on the MST problem starting with an arbitrary spanning tree of a non-empty graph is at most~$2\euler m^2(1+\ln m+\ln w_{\text{max}})$.
\end{theorem}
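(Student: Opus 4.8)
The plan is to run the multiplicative drift theorem (Theorem~\ref{thm:multidrift}) on the most natural potential, namely the current excess weight over the optimum. Concretely, for the run $\{x^{(t)}\}_{t\in\mathbb{N}}$ of the (1+1)~EA I would set
\[
X^{(t)} := w(x^{(t)}) - w_{\text{opt}},
\]
which is a non-negative integer that vanishes exactly when $x^{(t)}$ represents a minimum spanning tree. Since we start from a spanning tree and the penalty term keeps the process inside the set of spanning trees, every $X^{(t)}$ is attained by a genuine tree, so $X^{(t)}$ indeed takes values in a finite set of positive integers together with $0$, as required.

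The heart of the argument is to verify the multiplicative drift condition~(\ref{eq:multidrift}) pointwise. Fix a non-minimum tree $x$ with $w(x)-w_{\text{opt}} = s$. By Lemma~\ref{lem:mst} there are $k \le n-1$ pairwise different accepted $2$-bit flips whose weight decreases $d_1,\dots,d_k$ have average at least $s/k$, hence $\sum_{j=1}^k d_j \ge s$. For each such flip, the event that the mutation flips exactly its two bits and leaves all others untouched has probability $\frac{1}{m^2}\big(1-\frac1m\big)^{m-2} \ge \frac{1}{\euler m^2}$; on this event the offspring is precisely the tree produced by that flip, so it is accepted and the potential drops by exactly $d_j$. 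As these $k$ events prescribe different pairs of flipping bits, they are disjoint, and linearity of expectation gives
\[
\EXP\big[X^{(t)}-X^{(t+1)} \bmid x^{(t)}=x\big] \ge \sum_{j=1}^{k} d_j \cdot \frac{1}{\euler m^2} \ge \frac{s}{\euler m^2}.
\]
Thus condition~(\ref{eq:multidrift}) holds with $\delta = 1/(\euler m^2)$.

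It then remains to bound the two quantities entering the conclusion of Theorem~\ref{thm:multidrift}. Because the edge weights are positive integers, distinct tree weights differ by at least one, so $s_{\min} \ge 1$. A spanning tree has $n-1 \le m$ edges, each of weight at most $w_{\text{max}}$, so the initial excess satisfies $s_0 \le (n-1)\,w_{\text{max}} \le m\,w_{\text{max}}$. Plugging $\delta$, $s_{\min}$ and $s_0$ into Theorem~\ref{thm:multidrift} yields
\[
\EXP[T] \le \frac{1+\ln(s_0/s_{\min})}{\delta} \le \euler m^2\big(1+\ln(m\,w_{\text{max}})\big) = \euler m^2\big(1+\ln m + \ln w_{\text{max}}\big),
\]
which is comfortably within the claimed bound $2\euler m^2(1+\ln m+\ln w_{\text{max}})$.

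I expect the only genuinely delicate step to be the verification of the drift bound: one must argue that restricting to the single-$2$-bit-flip events is legitimate (each such offspring is automatically accepted, so the conditional decrease is exactly $d_j$ and never negative), that these events are disjoint so that their contributions may simply be summed, and that the probability of an isolated $2$-bit flip is at least $1/(\euler m^2)$. Everything else --- the choice of potential, the integrality giving $s_{\min}\ge1$, and the crude bound $s_0 \le m\,w_{\text{max}}$ --- is routine, and the multiplicative drift theorem does the remaining work of turning the per-step relative progress guaranteed by Lemma~\ref{lem:mst} into the logarithmic runtime bound.
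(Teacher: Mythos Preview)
Your proposal is correct and follows essentially the same route as the paper: the same potential $X^{(t)}=w(x^{(t)})-w_{\text{opt}}$, the same use of Lemma~\ref{lem:mst} to exhibit $k$ disjoint accepted $2$-bit flips whose total decrease is at least $s$, the same lower bound $1/(\euler m^2)$ on the probability of an isolated $2$-bit flip, and the same invocation of Theorem~\ref{thm:multidrift} with $s_{\min}\ge 1$ and $s_0\le m\,w_{\text{max}}$. Your observation that this in fact yields $\euler m^2(1+\ln m+\ln w_{\text{max}})$, i.e.\ half the stated bound, matches what the paper's own proof actually delivers; the factor $2$ in the theorem statement is simply slack.
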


\begin{proof}
For all~$t\in\mathbb{N}$, let $x^{(t)}$ be the search point of the (1+1)~EA for the MST problem at time~$t$ and let $X^{(t)}=w(x^{(t)})-w_{\text{opt}}$. Then
\begin{equation*}
X^{(t)}-X^{(t+1)}=w(x^{(t)})-w(x^{(t+1)}).
\end{equation*}

Now, let~$t\in\mathbb{N}$ and~$x\in\{0,1\}^n\setminus\{(0,\dots,0)\}$ be fixed. Let the points $y_{(1)},\dots,y_{(k)}$ with~$k\in\{0,\dots,n-1\}$ be the~$k$ distinct search points in~$\{0,1\}^m$ generated from~$x$ by the $k$ different 2-bit flips according to Lemma~\ref{lem:mst}. That is, we have $w(y_{(i)})\le w(x)$ for all~$i\in\{1,\dots,k\}$ and
\begin{equation}
\label{eq:mstaverage}
\sum_{i=1}^k\big(f(x)-f(y_i)\big)\ge w(x)-w_{\text{opt}}.
\end{equation}

Since the~$y_{(i)}$'s are each generated from~$x$ by a 2-bit flip, we have
\begin{equation}
\label{eq:mstprob}
\Pr\big[x^{(t+1)}=y_{(i)}\bmid x^{(t)}=x\big]=\Big(1-\frac{1}{m}\Big)^{m-2}\Big(\frac{1}{m}\Big)^2
\end{equation}
for all~$i\in\{1,\dots,k\}$. Furthermore
\begin{equation}
\label{eq:mstexp}
\EXP\big[X^{(t)}-X^{(t+1)}\bmid x^{(t)}=x,x^{(t+1)}=y_{(i)}\big]=w(x)-w(y_{(i)})
\end{equation}
holds for all~$i\in\{1,\dots,k\}$.

The (1+1)~EA never increases the current $w$-value of a search point, that is, $X^{(t)}-X^{(t+1)}$ is non-negative. Thus, we have by~(\ref{eq:mstprob}) and~(\ref{eq:mstexp}) that
\begin{equation*}
\EXP\big[X^{(t)}-X^{(t+1)}\bmid x^{(t)}=x\big]\ge\sum_{i=1}^k\Big(w(x)-w(y_{(i)})\Big)\Big(1-\frac{1}{m}\Big)^{m-2}\Big(\frac{1}{m}\Big)^2
\end{equation*}
and therefore, by inequality~(\ref{eq:mstaverage}), we have for all~$x\in\{0,1\}^m$ that
\begin{equation*}
\EXP\big[X^{(t)}-X^{(t+1)}\bmid x^{(t)}=x\big]
\ge
\frac{w(x)-w_{\text{opt}}}{\euler m^2}.
\end{equation*}
In other words,
\begin{equation*}
\EXP[X^{(t)}-X^{(t+1)}\mid X^{(t)}]\ge \frac{X^{(t)}}{\euler m^2}
\end{equation*}
and the theorem follows from the Theorem~\ref{thm:multidrift} with $1\le X^{(t)}\le m w_{\text{max}}$.
\end{proof}

\subsection{The Single-source Shortest Path Problem}
\label{subsec:SSSP}
In~\cite{BaswanaBDFKN09}, Baswana, Biswas, Doerr, Friedrich, Kurur, and Neumann study an evolutionary algorithm that solves the single-source shortest path (SSSP) problem on a directed graph with~$n$ vertices via evolving a shortest-path tree. In the analysis of the upper bound for the expected optimization time, the authors introduce the \emph{gap} $g_i$ in iteration~$i$ as the difference in fitness between the current shortest-path tree candidate and an optimal shortest-path tree.

For every vertex in the tree, its \emph{weight} in the tree is defined as the sum over the weights of edges in the paths leading to the root vertex, or as the penalty term~$n w_{\text{max}}$ if the vertex is not connected to the root. The fitness of a shortest-path tree candidate is then the sum over the weights of all vertices in the tree. Thus the maximal gap is~$n^2 w_{\text{max}}$. In Lemma~1 of~\cite{BaswanaBDFKN09}, the authors then provide the following statement.

\begin{lemma}[\cite{BaswanaBDFKN09}]
Let~$g_i$ denote the gap after~$i$ mutations. Then it holds for the conditional expectation~$\EXP[g_{i+1}\mid g_i=g]$ that
\begin{equation*}
\EXP[g_{i+1}\mid g_i=g]\le g \Big(1-\frac{1}{3\cdot  n^3}\Big).
\end{equation*}
\end{lemma}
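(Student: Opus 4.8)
The plan is to bound the expected new gap by carefully analysing the drift contributions of individual vertices. For each vertex $v$ in the current shortest-path tree, let its current weight (distance to the root) exceed the optimal weight by some amount, and let $\delta_v\ge 0$ denote this individual gap, so that $g=\sum_v \delta_v$. The key structural observation from the SSSP setting is that whenever a vertex $v$ has a suboptimal weight, there exists a single edge swap (a one-bit or constant-bit mutation that redirects the incoming edge of $v$ along an optimal path from an already-correctly-connected predecessor) that strictly decreases $v$'s weight, and in fact closes the gap $\delta_v$ at least fractionally. First I would identify, for each misconnected vertex, such an improving move and lower-bound the probability that the (1+1)~EA performs exactly this move in one iteration.

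Next I would estimate that probability. Since the algorithm flips each of the $m$ relevant bits independently with probability $1/n$, a single targeted improving mutation occurs with probability at least $\tfrac{1}{n}(1-\tfrac1n)^{n-1}\ge \tfrac{1}{\euler n}$, and more crudely at least of order $1/n^2$ once one accounts for the fact that the move may require flipping a constant number of bits (removing the bad edge and inserting the good one). The factor $3\cdot n^3$ in the statement strongly suggests the intended bound tolerates a generous constant: the number of vertices contributing to the gap is at most $n$, each improving move has probability at least on the order of $1/n^2$, and the selection step never increases the fitness, so the expected decrease is at least a $(1/(3n^3))$-fraction of $g$. I would therefore sum the per-vertex expected decreases, using linearity of expectation and the fact that distinct vertices correspond to distinct (disjoint) improving moves so their probabilities add up, to obtain
\begin{equation*}
\EXP[g_i-g_{i+1}\mid g_i=g]\ge \sum_v \delta_v \cdot \frac{1}{3\,n^3} = \frac{g}{3\,n^3},
\end{equation*}
where the $(1-1/n)$-type factors and the at-most-constant number of bits per move are absorbed into the constant $3$.

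Rearranging this into $\EXP[g_{i+1}\mid g_i=g]\le g(1-1/(3n^3))$ gives the claimed inequality, and here one uses that the penalty term $n w_{\text{max}}$ for disconnected vertices guarantees that accepting a mutation never raises the fitness, so that $g_{i+1}\le g_i$ always holds and no negative contributions need to be subtracted. The main obstacle I expect is the precise accounting of the improving moves: one must argue that for every suboptimal vertex there is a dedicated improving swap, that these swaps are \emph{pairwise distinct} across vertices so their success probabilities may be added rather than conflated, and that each such swap, when executed, reduces the total gap by at least the individual gap $\delta_v$ (and is therefore accepted). Verifying this requires the graph-theoretic fact that a shortest-path tree admits, for each misrouted vertex, a single-edge redirection toward the optimum along a correctly-weighted predecessor — essentially an exchange-argument analogue of Lemma~\ref{lem:mst} for the MST case — and then checking that the probability of a two-bit flip, $(1-1/m)^{m-2}(1/m)^2$, together with the bound on the number of contributing vertices, yields the constant $1/(3n^3)$ after simplification.
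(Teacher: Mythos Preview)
The paper does not prove this lemma; it quotes it verbatim from \cite{BaswanaBDFKN09} (announced there as ``Lemma~1'') and immediately feeds it into Theorem~\ref{thm:multidrift}. So there is no in-paper proof to compare your sketch against, and your attempt has to be judged on its own merits.

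On those merits, the plan has a real gap. Your central structural claim --- that every suboptimal vertex $v$ admits a dedicated single-edge redirection to an \emph{already correctly connected} predecessor on its optimal path, that these redirections are pairwise distinct, and that each one reduces the total gap by at least $\delta_v$ --- does not hold. The only vertex adjacent to $v$ on its optimal $s$--$v$ path is its optimal predecessor $p^*(v)$, and nothing forces $\delta_{p^*(v)}=0$. Redirecting $v$ to $p^*(v)$ lowers $\delta_v$ only to $\delta_{p^*(v)}$, so the gain is $(\delta_v-\delta_{p^*(v)})$ times the size of $v$'s current subtree, which can be much smaller than $\delta_v$ (or negative, in which case selection rejects the move). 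If instead you associate to each $v$ the ``bottleneck'' move --- redirect the first suboptimal vertex $w$ on the optimal $s$--$v$ path to its correct parent --- then many different $v$ share the same $w$, the moves are no longer pairwise distinct, and you cannot simply add their probabilities. Either reading breaks the summation $\sum_v \delta_v\cdot\tfrac{1}{3n^3}=g/(3n^3)$. The analogy you draw with Lemma~\ref{lem:mst} is exactly the point that needs a genuine SSSP-specific exchange argument (as carried out in \cite{BaswanaBDFKN09}) rather than an assertion; the accounting that makes the per-vertex contributions cover all of $g$ is the missing idea, not merely a detail to verify.
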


To this, we can directly apply Theorem~\ref{thm:multidrift}, taking the gap as a potential. We obtain the following result with a precise constant for the upper bound.

\begin{theorem}
\label{thm:sssp}
The expected optimization time of the (1+1)~EA in~\cite{BaswanaBDFKN09} on the SSSP problem starting with an arbitrary shortest-path tree candidate is at most~$6 n^3 (1+2 \ln n+\ln w_{\text{max}})$.
\end{theorem}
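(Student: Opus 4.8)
The plan is to apply the multiplicative drift theorem (Theorem~\ref{thm:multidrift}) directly, with the gap itself as the potential function, in exactly the same spirit as the proof of Theorem~\ref{thm:mst}. The preceding lemma already carries out all the structural work by certifying that the gap shrinks by a constant multiplicative factor in expectation; what remains is only to read off the parameters $\delta$, $s_{\min}$, and $s_0$ of Theorem~\ref{thm:multidrift} and to substitute them.

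First I would set $X^{(t)}:=g_t$, the gap after $t$ iterations of the algorithm, and observe that $X^{(t)}=0$ precisely when the current candidate is an optimal shortest-path tree, so that the stopping time $T$ in Theorem~\ref{thm:multidrift} is exactly the optimization time of the algorithm. The preceding lemma states $\EXP[g_{i+1}\mid g_i=g]\le g\bigl(1-\tfrac{1}{3n^3}\bigr)$, which rearranges at once into the multiplicative drift condition $\EXP[X^{(t)}-X^{(t+1)}\mid X^{(t)}=g]\ge g/(3n^3)$. Hence the drift constant is $\delta=1/(3n^3)$.

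Second I would pin down the range of attainable potential values. Since the vertex weights and the penalty term $n\,w_{\text{max}}$ are built from positive integer edge weights, every attained gap is a non-negative integer, so its smallest positive value satisfies $s_{\min}\ge 1$; and, as recorded in the text, the maximal gap is at most $n^2 w_{\text{max}}$, so any initial value obeys $s_0\le n^2 w_{\text{max}}$. Feeding $\delta$, $s_{\min}$, and $s_0$ into Theorem~\ref{thm:multidrift} yields $\EXP[T]\le (1+\ln(s_0/s_{\min}))/\delta\le 3n^3\bigl(1+\ln(n^2 w_{\text{max}})\bigr)=3n^3(1+2\ln n+\ln w_{\text{max}})$, which is comfortably within the claimed bound.

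This argument is essentially bookkeeping once the cited lemma is in hand, so there is no serious obstacle. The only point needing a little care --- and the place where a looser estimate would inflate the leading constant --- is the correct description of the finite set $S$ of attainable potential values: one must justify $s_{\min}\ge 1$ and the upper bound $n^2 w_{\text{max}}$ on the gap so that Theorem~\ref{thm:multidrift} applies verbatim. Everything else is a direct substitution.
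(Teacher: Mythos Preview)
Your proposal is correct and follows exactly the approach the paper takes: apply Theorem~\ref{thm:multidrift} with the gap as potential, reading off $\delta=1/(3n^3)$ from the cited lemma, $s_{\min}\ge 1$ from integrality of the weights, and $s_0\le n^2 w_{\text{max}}$ from the stated maximal gap. Your computation in fact yields the sharper constant $3n^3$ rather than the $6n^3$ in the statement, which of course still proves the theorem as written.
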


\subsection{The Euler Tour Problem}
\label{subsec:Euler}

The Euler tour problem is to find a Euler tour (a closed walk that visits every edge exactly once) in an input graph which permits such a tour. 

In \cite{DoerrJ07}, possible variants of the (1+1)~EA for the Euler tour problem are analyzed. For the variant using the so-called edge-based distribution on cycle covers, the search space is given by \emph{adjacency list matchings}, where each matching represents a cover of the input graph with edge-disjoint cycles. The fitness of a matching is given by the total number of cycles in the cover. Thus, a fitness of one implies that the graph is covered by a single cycle --- an Euler tour. 

Finding such a tour is then a minimization problem over this search space. For this setup, the following statement is implicitly shown in the proof of Theorem~3.

\begin{lemma}
In a single iteration of the (1+1)~EA in~\cite{DoerrJ07} for the Euler tour problem using the edge-based distribution and starting with an arbitrary cycle cover, the probability to decrease the fitness~$f(x)$ of the current search point~$x$ by one (provided it was not minimal before) is at least~$f(x)/\euler{}m$ where $m$ is the number of edges of the input graph.
\end{lemma}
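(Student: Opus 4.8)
The plan is to exhibit, for any non-minimal cycle cover $x$ (so that $f(x)\ge 2$), a collection of at least $f(x)$ \emph{distinct} mutations, each of which merges two cycles into a single cycle — thereby decreasing the fitness by exactly one — and each of which occurs with probability at least $\frac{1}{\euler m}$. Since distinct mutations correspond to pairwise disjoint events, summing their probabilities yields the bound $f(x)/(\euler m)$. This per-step estimate is exactly the multiplicative drift required to feed into Theorem~\ref{thm:multidrift}.

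First I would exploit connectivity. Since the input graph admits an Euler tour it is connected, so if the current cover consists of $f(x)\ge 2$ edge-disjoint cycles, the auxiliary \emph{cycle-intersection graph} $H$ — with one node per cycle and an edge whenever two cycles share a vertex of $G$ — is connected as well. A spanning tree of $H$ therefore provides $f(x)-1$ pairs of cycles $(C,C')$ meeting at a common vertex $v$ of $G$. At such a vertex the adjacency-list matching pairs up the incident edge-slots; $C$ occupies one matched pair $\{a,b\}$ and $C'$ another pair $\{c,d\}$. The only two other perfect matchings on these four slots, namely $\{a,c\},\{b,d\}$ and $\{a,d\},\{b,c\}$, each splice $C$ and $C'$ into a single cycle, so every shared vertex yields \emph{two} distinct merging re-pairings. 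As distinct re-pairings (and distinct spanning-tree pairs) produce distinct offspring matchings, this gives at least $2(f(x)-1)\ge f(x)$ distinct fitness-decreasing mutations, the last inequality holding precisely because $f(x)\ge 2$.

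Next I would bound the probability of each merge. In the edge-based model a single re-pairing is triggered by activating one specific edge while leaving every other edge untouched; mirroring the single-bit-flip calculation for the bit-string (1+1)~EA, this event has probability $\frac{1}{m}\bigl(1-\frac{1}{m}\bigr)^{m-1}\ge\frac{1}{\euler m}$. Because the $\ge f(x)$ merging mutations identified above are distinct, and hence correspond to disjoint events, their probabilities add, giving a probability of decreasing the fitness by one of at least $f(x)\cdot\frac{1}{\euler m}=\frac{f(x)}{\euler m}$.

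The main obstacle is the precise bookkeeping for the edge-based mutation of \cite{DoerrJ07}: I must confirm that each cycle-merge is realized by activating a \emph{single} edge (so that the factor $\frac{1}{m}$, rather than $\frac{1}{m^2}$, is correct) and that the resulting offspring matchings are genuinely pairwise distinct, so that the events are disjoint. A secondary subtlety is that a vertex shared by three or more cycles could serve several spanning-tree pairs at once; to keep the $2(f(x)-1)$ count honest one should fix, for each tree edge, a single representative shared vertex and verify that the re-pairings chosen across different tree edges remain distinct.
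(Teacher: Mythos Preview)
The paper does not prove this lemma at all; it only records that the statement ``is implicitly shown in the proof of Theorem~3'' of \cite{DoerrJ07} and then moves on to apply it. So there is no in-paper argument to compare your attempt against; the benchmark is the original argument in \cite{DoerrJ07}.

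Your combinatorial skeleton is the right one and matches what is done there: use connectivity of the cycle-intersection graph, take a spanning tree to obtain $f(x)-1$ touching pairs, observe that at a shared vertex the four involved edge-slots admit two non-trivial re-matchings, each of which splices the two cycles into one, and hence exhibit at least $2(f(x)-1)\ge f(x)$ distinct fitness-decreasing offspring.

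Where your proposal is thin is exactly where you flag it, and the issue is more than bookkeeping. You write that ``a single re-pairing is triggered by activating one specific edge while leaving every other edge untouched'' and therefore has probability $\tfrac{1}{m}(1-\tfrac{1}{m})^{m-1}\ge \tfrac{1}{\euler m}$. In the edge-based mutation of \cite{DoerrJ07}, however, selecting an edge does not deterministically produce a particular re-pairing: after the edge is chosen, further random choices (an endpoint, and a new partner among the other incident edges at that endpoint) are made. Hence the probability of hitting one \emph{specific} offspring matching is smaller than $1/(\euler m)$ by a factor depending on the degree of the shared vertex, and summing your $2(f(x)-1)$ terms does not automatically reach $f(x)/(\euler m)$. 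The $1/m$ pays for the edge choice only, not for the subsequent randomness. To make the count go through you have to aggregate over all (activated edge, endpoint, new partner) triples that realise a merge at a given touching vertex---several distinct edges lead to the same merge, which compensates for the extra random choices---rather than lower-bounding the probability of each individual offspring by $1/(\euler m)$. Once you look up the precise operator in \cite{DoerrJ07} and do this aggregation, the rest of your outline carries through.
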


If we set the fitness minus one as potential, this lemma immediately implies that the expected drift is at least~$f(x)/\euler{}m$. Moreover, the starting potential is at most $m/3$ (each tour has hat least three edges). Again, we can apply Theorem~\ref{thm:multidrift} and reproduce the upper bound the expected optimization time, specifying the leading constant in the process.

\begin{theorem}
\label{thm:euler}
The expected optimization time of the (1+1)~EA in~\cite{DoerrJ07} for the Euler tour problem using the edge-based distribution and starting with an arbitrary cycle cover is at most $\euler{}m\ln m$, where $m$ is the number of edges in the input graph.
\end{theorem}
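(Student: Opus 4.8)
The plan is to invoke the multiplicative drift theorem (Theorem~\ref{thm:multidrift}) with the potential obtained by shifting the fitness down by one, exactly as suggested in the text preceding the statement. Concretely, for the search point $x^{(t)}$ in iteration $t$ I set $X^{(t)} := f(x^{(t)}) - 1$. Since the fitness counts the number of cycles in the cover, it is a positive integer that equals one precisely when the cover is a single cycle, i.e., an Euler tour; hence $X^{(t)}$ takes values in a finite set $S \cup \{0\}$ of non-negative integers, with $X^{(t)} = 0$ exactly at an optimum and with smallest positive value $s_{\min} = 1$. The optimization time $T$ of the (1+1)~EA is then the first time $X^{(t)} = 0$, as required by the theorem.

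Next I would verify the multiplicative drift condition~(\ref{eq:multidrift}). Because the (1+1)~EA rejects any mutation that increases the fitness, we always have $f(x^{(t+1)}) \le f(x^{(t)})$, so $X^{(t)} - X^{(t+1)} \ge 0$ and every outcome contributes non-negatively to the drift. By the preceding lemma, from a non-optimal search point the fitness drops by one with probability at least $f(x)/(\euler m)$, and such a step lowers the potential by exactly one while all other accepted steps keep it non-increasing. Conditioning on $X^{(t)} = s$, i.e., on $f(x^{(t)}) = s + 1$, this gives
\begin{equation*}
\EXP\big[X^{(t)} - X^{(t+1)} \bmid X^{(t)} = s\big] \ge \frac{s+1}{\euler m} \ge \frac{s}{\euler m},
\end{equation*}
so condition~(\ref{eq:multidrift}) holds with $\delta = 1/(\euler m)$.

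Finally I would bound the initial potential and assemble the estimate. As every cycle in a cover uses at least three edges, a cover of the $m$-edge graph contains at most $m/3$ cycles, whence $s_0 = f(x^{(0)}) - 1 \le m/3 - 1 < m/3$. Plugging $s_{\min} = 1$, $\delta = 1/(\euler m)$, and this bound on $s_0$ into Theorem~\ref{thm:multidrift} yields
\begin{equation*}
\EXP[T] \le \frac{1 + \ln(s_0/s_{\min})}{\delta} \le \euler m\big(1 + \ln(m/3)\big) = \euler m\big(\ln m + 1 - \ln 3\big) < \euler m \ln m,
\end{equation*}
where the last step uses $\ln 3 > 1$. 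There is no genuinely hard step here: the argument is a direct application of the multiplicative drift theorem, and the only point requiring care is choosing the shift so that $s_{\min} = 1$ and then noticing that the additive slack ``$1+{}$'' in the drift bound is absorbed by the negative term $1 - \ln 3$ coming from the factor $1/3$ in the initial potential, which is precisely what pushes $1 + \ln(m/3)$ below $\ln m$.
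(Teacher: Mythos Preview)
Your proposal is correct and follows exactly the approach sketched in the paper: take the shifted potential $X^{(t)}=f(x^{(t)})-1$, use the preceding lemma to get a multiplicative drift with $\delta=1/(\euler m)$, bound the initial potential via the fact that every cycle has at least three edges, and then observe that $1+\ln(m/3)<\ln m$ because $\ln 3>1$. If anything, you are slightly more careful than the paper, which states ``the starting potential is at most $m/3$'' rather than $m/3-1$ and leaves the final inequality $1+\ln(m/3)<\ln m$ implicit.
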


\section{Discussion and Outlook}%
\label{sec:Discussion}

In this work, we showed that the multiplicative drift condition (\ref{eq:multidrift}) occurs naturally in the runtime analysis of the (1+1)~EA for number of prominent optimization problems (linear functions, minimum spanning trees, shortest paths, and Euler tours). In such situations our multiplicative drift theorem (Theorem~\ref{eq:multidrift}) yields good runtime bounds.

We applied this new tool to various settings. First, we used it to gain new insight in the classical problem of how the (1+1)~EA optimizes linear functions. 

We presented a simplified proof of the, by now, well-known fact that the (1+1)~EA with mutation probability~$1/n$ optimizes any linear function in time $O(n \log n)$. Moreover, we applied our result to the distribution-based drift analysis of J\"agersk\"upper and obtained a new upper bound of $(1+o(1)) 1.39 \euler n \ln(n)$ for the expected optimization time of the (1+1)~EA for arbitrary linear functions.

We complement this upper bound by a lower bound of $(1-o(1)) \euler n \ln(n)$. To do so, we showed that \onemax is the function easiest optimized by the (1+1)~EA. By this we extended a recent lower bound of $(1-o(1)) \euler n \ln(n)$ for the expected optimization time on \onemax to all functions having a unique global optimum. 

Our upper and lower bounds for the expected optimization times of the (1+1)~EA on arbitrary linear functions are relatively close. This raises the question if possibly all linear functions have the same expected optimization time of $(1+o(1)) \euler n \ln(n)$. 

Finally, we reviewed previous runtime analyses of the (1+1)~EA on the combinatorial problems of finding a minimum spanning tree, shortest path tree, or Euler tour in a graph. For all three cases, we exhibited the appearance of multiplicative drift and determined the leading constants in the bounds of the expected optimization times.

In the light of these natural occurrences of multiplicative drift, we are optimistic to see applications of multiplicative drift analysis in the near future.

\subsection*{Acknowledgments}%
We like to thank Dirk Sudholt for pointing out that Theorem~\ref{thm:lowerbound} holds for all pseudo-Boolean function with a unique global optimum rather than only for all linear functions.

\subsection*{Note Added in Proof}
Recently, Doerr and Goldberg~\cite{DoerrG10} have shown that in Theorem~\ref{thm:multidrift}, the stopping time~$T$ is with high probability at most of the same order as the upper bound on its expectation given in inequality~(\ref{eq:multidrift}), if $X^{(0)}$ is at least $\Omega(n)$. Thus, the implicit upper bound given in Theorem~\ref{thm:upperbound} and the bounds in Theorem~\ref{thm:mst} and Theorem~\ref{thm:sssp} also hold with high probability, if we allow a slightly larger leading constant.

\newcommand{\etalchar}[1]{$^{#1}$}


\end{document}